\title[RL with almost sure constraints]{Reinforcement Learning with Almost Sure Constraints}
\definecolor{bleudefrance}{rgb}{0.19, 0.55, 0.91}
\definecolor{ao(english)}{rgb}{0.0, 0.5, 0.0}
\newcommand{\addcite}[0]{\ifthenelse{\boolean{showcomments}}
{\textcolor{purple}{(add cite(s)) }}{}}%
\newcommand{\enrique}[1]{  \ifthenelse{\boolean{showcomments}}
{\todo[inline,color=bleudefrance]{Enrique: #1}}{}}
\newcommand{\emmargin}[1]{\ifthenelse{\boolean{showcomments}}{\marginpar{\color{bleudefrance}\tiny EM: #1}}{}}
\newcommand{\juan}[1]{  \ifthenelse{\boolean{showcomments}}
{\todo[inline,color=pink]{Juan: #1}}{}}
\newcommand{\agustin}[1]{  \ifthenelse{\boolean{showcomments}}
{\todo[inline,color=purple!50]{Agustin: #1}}{}}
\newcommand{\hancheng}[1]{  \ifthenelse{\boolean{showcomments}}
{\todo[inline,color=orange]{Hancheng: #1}}{}}
\newcommand{\hmfnote}[1]{  \ifthenelse{\boolean{showcomments}}
{\footnote{{\color{orange} Hancheng: #1}}}{}}
\newcommand{\aem}[1]{
\ifthenelse{\boolean{showedits}}
{\added[id=EM]{#1}}
{\!#1\hspace{-4.75pt}}
}
\newcommand{\repem}[2]{
\ifthenelse{\boolean{showedits}}
{\replaced[id=EM]{#1}{#2}}
{\!#1\hspace{-4.75pt}}
}
\newcommand{\dem}[1]{
\ifthenelse{\boolean{showedits}}
{\deleted[id=EM]{#1}}
{}
}
\DeclareMathOperator*{\argmin}{arg\,min}
\author{%
\Name{Agustin Castellano} \Email{acaste11@jhu.edu}\\
\Name{Hancheng Min} \Email{hanchmin@jhu.edu}\\
\addr Johns Hopkins University, Baltimore, MD, USA
\AND
\Name{Juan Bazerque} \Email{jbazerque@fing.edu.uy}\\
\addr Universidad de la Rep\'ublica, Montevideo, Uruguay
\AND
\Name{Enrique Mallada} \Email{mallada@jhu.edu}\\
\addr Johns Hopkins University, Baltimore, MD, USA
}
\begin{document}
\maketitle
\begin{abstract}%
 In this work we address the problem of finding feasible policies for Constrained Markov Decision Processes under probability one constraints. We argue that stationary policies are not sufficient for solving this problem, and that a rich class of policies can be found by endowing the controller with a scalar quantity, so called budget, that tracks how close the agent is to violating the constraint. We show that the minimal budget required to act safely can be obtained as the smallest fixed point of a Bellman-like operator, for which we analyze its convergence properties. We also show how to learn this quantity when the true kernel of the Markov decision process is not known, while providing sample-complexity bounds.~
 The utility of knowing this minimal budget relies in that it can aid in the search of optimal or near-optimal policies by shrinking down the region of the state space the agent must navigate. Simulations illustrate the different nature of probability one constraints against the typically used constraints in expectation.
\end{abstract}
\begin{keywords}%
Constrained MDPs, Safe RL, RL for physical systems, sample-efficient learning.
\end{keywords}
\section{Introduction}
With the huge availability of data made possible by cheap sensors and widespread telecommunications, the control paradigm has shifted: the previous-century approach, which relied heavily on system modeling followed by careful control design is now moving towards an \textit{improve-as-you-go} approach, in which controllers are refined on a step by step basis as more data becomes readily available. One of the main tools aiding in the design of these controllers is Reinforcement Learning (RL) \citep{sutton2018reinforcement, bertsekas-rl}. This relatively new field has seen a rebirth in recent years, obtaining outstanding performance in certain domains, particularly when the algorithms are coupled with deep neural networks \citep{mnih2015human} and tree-search methods \citep{silver2016mastering}.
Notwithstanding, this super-human performance has been mostly obtained on setups where \textit{i)} the domain is virtual, \textit{ii)} transition dynamics are fairly simple and \textit{iii)} training is computationally-intesive. There is huge promise, however, in the potential of RL to be extended to complex real-world tasks such as autonomous transportation or robot manipulation, where \textit{safety is paramount}. 

In the subfield of Safe RL, most of the current corpus relies on adding constraints in expectation to trade-off between the conflicting goals of achieving good performance while satisfying feasibility \citep{geibel2006reinforcement,miryoosefi2019reinforcement}, and commonly used methods rely on primal-dual algorithms that take into consideration both the reward function and the constraints to be met \citep{paternain2019learning, ding2020natural}. These methods, however, typically guarantee feasibility only asymptotically---with a possibly unbounded number of constraint violations during training, something highly undesirable in safety-critical systems. Other approaches include formal verification methods \citep{junges1}, which first deal with computing permissive (ie. feasible) strategies, restricting the actions agents can take at each step \citep{junges2}.

In this work we argue that specifying hard constraints actually aids in the development of controllers, since feasible policies are easy to find. This is similar to what some authors have done in the field of deterministic finite automata, where low-complexity policies can be found rapidly \citep{stefansson}. Once safe policies are learnt---or equivalently, once unsafe states and actions are identified---the search for good performance can be done over a smaller set. For real-world applications with physical systems it is critical to keep track of the number of interactions between the agent and the environment. This has led to a drive to develop sample-complexity bounds \citep{agarwal2020modelbased}, and most recently, sample-complexity bounds for learning policies with zero and bounded constraint violations \citep{dileep1,dileep2}.

\noindent
\textbf{Paper outline}: In Section \ref{sec:problem-formulation} we formulate the problem and illustrate why stationary policies are not sufficient under this setting. Section \ref{sec:adequacy} contains the main results of the paper. We show a bijection between the original MDP and one that tracks---via a quantity called \textit{budget}---how close the agent is to violating the constraint. We show feasible policies can be completely characterized in terms of the minimal required budget, which can be obtained as the solution of a fixed point iteration. This requires, however, knowledge of the transition kernel of the MDP. In Section \ref{sec:sample-complexity} we improve on this result by showing that the budget can be learned if one knows an approximate kernel, and give sample-complexity bounds to construct it. Numerical experiments showing the different nature of our proposed constraint as opposed to the state of the art expectation-based counterparts are presented in Section \ref{sec:experiments}, and we conclude in Section \ref{sec:conclusions}.\\
\ifthenelse{\boolean{submit-to-conference}}{\textbf{Proofs}: We provide proof sketches for all our results. Detailed proofs can be found in the appendix of \cite{paper-w-proofs}.}

\section{Problem formulation}\label{sec:problem-formulation}
{Consider a finite state space, finite action space and infinite horizon Constrained Markov Decision Process (CMDP)} defined as a tuple $\mathcal{M}=\left(\mathcal{S}, \mathcal{A}, p, r, d\right)$ where $\mathcal{S}$ is the set of states, $\mathcal{A}$ is the set of actions, $p$ is the kernel that specifies the conditional transition probability $p(s',r,d|s,a)$, $r\in\mathbb{R}$ is the reward and $d\in\{0,1\}$ is a binary-valued \textit{damage indicator} used to model constraint violations. Consider also a user-specified \textit{total damage budget} $\Delta$. The goal in this case is to achieve the highest return while never allowing more than $\Delta$ units of damage in a single trajectory:
\begin{subequations}
\begin{align}
    \max_{\pi\in\Pi_H} &~\mathbb{E}_\pi\left[\sum_{t=0}^\infty R_{t+1} ~\bigg|~ S_0=s\right] \label{eq:maximize-return}\\
    \text{s.t:}&~ P_\pi\left(\sum_{t=0}^\infty D_{t+1} \leq \Delta ~\bigg|~ S_0=s\right)=1\,, \label{eq:og-constraint}
\end{align}
\label{eq:1st-subeq}
\end{subequations}
where the initial state $s$ is fixed and the maximization is carried over the set of general, history-dependent policies $\Pi_H$. 
{We choose to call this framework a  CMDP in the literal sense (as it is an MDP subject to constraints) even though the probability-one constraint \eqref{eq:og-constraint} deviates from the usual expectation-based ones \citep{altman}.} We assume there is an absorbing termination state such that when the system enters this state it remains there with no further reward. We also assume that the structure of the MDP is such that this state is eventually reached under any policy, a common assumption for stochastic shortest path problems \citep{bertsekas-dp}. 

Recalling that the damage $D_t$ is a binary-valued random variable, in essence the quantity $\Delta$ serves as a \textit{tolerance} to damage. A feasible policy is one that---almost surely---does not allow for more than $\Delta$ total damage along a single trajectory. The harshest case requires $\Delta=0$, in which no damage is allowed. When $\Delta=0$, it can be shown that the safety of a particular state-action pair can be encoded in a barrier function akin to the typical action-value function $Q$, under which feasible policies and high-return policies can be learned in parallel \citep{tac}. 

In the original formulation \eqref{eq:1st-subeq} the maximization is carried out over the broad class of history-dependent policies $\Pi_{H}$.  We define the history at time $t$ as the collection of $(S,A,S',R,D)$ tuples up to time $t$, that is $h_t=(s_0,a_0,r_1,d_1,s_1,\ldots,s_{t-1},a_{t-1},r_{t},d_t,s_t)$. 
Policies in this class induce a probability distribution over the set of actions conditioned on the history, i.e. $\pi(\cdot | h_t):\mathcal{A}\to[0,1]$.\\
The class of general policies is a very large set to work with, with the combination of possible histories growing exponentially as time increases. It is desirable, then, to avoid working with history-dependent policies and restrict the optimization over a simpler class that still attains optimal performance. Generally, the class of stationary policies $\pi(\cdot | s_t)$ is considered, in which the distribution over the actions is just a function of the current state.

It is a well-established fact that for unconstrained problems the stationary policies are \textit{complete}, in the sense that they can fully mimic the expected return obtained by any general, history-dependent policy. This result also carries over to constrained problems where the constraint is cast as the expected value of a sum \citep[Thm 3.1, eq. 3.1]{altman}. Borrowing from \textit{completeness}, we define the notion of \textit{adequacy} below, better suited to the type of constraint \eqref{eq:og-constraint} we are handling.
\begin{definition}[Adequacy]
A set of policies $\Pi$ is adequate if for any history-dependent policy $\pi_h$ that is feasible for \eqref{eq:1st-subeq} there exists a feasible policy $\pi\in\Pi$ such that $\mathbb{E}_\pi\left[\sum_{t=0}^\infty R_{t+1}|S_0=s\right]=\mathbb{E}_{\pi_h}\left[\sum_{t=0}^\infty R_{t+1}|S_0=s\right]$.
\end{definition}
It is easy to check that the set of stationary policies is not adequate for solving \eqref{eq:1st-subeq}, as argued in the following proposition. 
\begin{proposition}[Stationary policies are not adequate for $\mathcal{M}$]\label{lem:non-adequacy}
The set of stationary policies is not adequate for solving \eqref{eq:1st-subeq}.
\begin{proof}
\begin{figure}[!h]
    \centering
    \includegraphics[width=.3\linewidth]{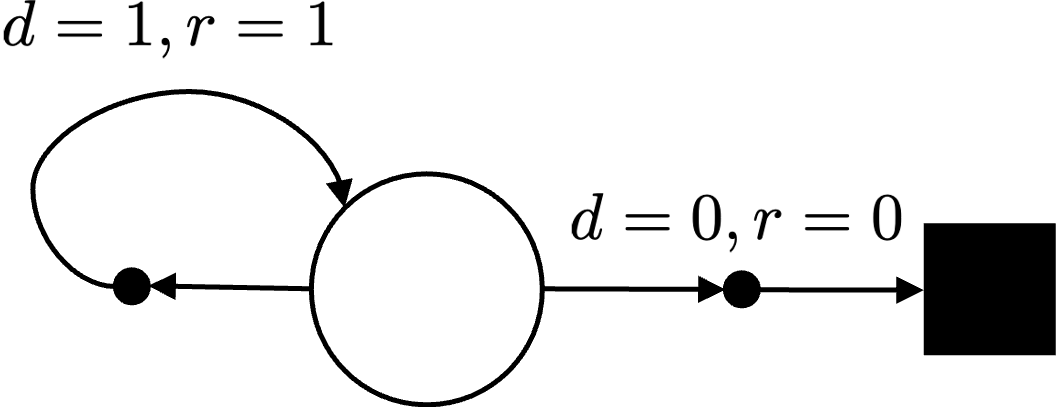}
    \caption{Example MDP: the episode starts in the circle state and ends upon reaching the square.} 
    \label{fig:counterexample}
\end{figure}
As a proof by counterexample, consider the MDP of Figure \ref{fig:counterexample}. The episode starts with the agent in the white circle state and ends when the black square is reached.
The two possible actions are \texttt{left} and \texttt{right}. The optimal policy picks \texttt{left} $\Delta$ times (accruing both reward and damage) and then goes \texttt{right}. It is clear that this policy is non-stationary. Moreover, the only feasible stationary policy is the one that always picks \texttt{right}, obtaining the least return.
The preceding example provides a hint on why general or history-dependent policies work for solving \eqref{eq:1st-subeq}: they keep track of both the rewards (useful for maximization) and the damage incurred so far (needed for feasibility). This fact will be used as a building block towards what will be developed in the next section. Namely, that endowing the controller with memory of the accumulated damage so far is sufficient for learning optimal behavior.
\end{proof}
\end{proposition}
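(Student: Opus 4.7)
My plan is to construct an explicit counterexample and argue formally that no stationary policy attains the same return as the best history-dependent feasible policy. I would take the MDP suggested in Figure~\ref{fig:counterexample} and make the transitions precise: the circle state $s_0$ is recurrent under \texttt{left} (it self-loops, accruing one unit of reward and one unit of damage per step), while \texttt{right} from $s_0$ deterministically reaches the absorbing square state with zero reward and zero damage. This minimal structure isolates the phenomenon: to reach the terminal state one must choose \texttt{right} eventually, but each \texttt{left} yields both reward and damage.

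The first step is to exhibit a feasible history-dependent policy with return $\Delta$. The policy $\pi_h$ that chooses \texttt{left} in steps $0,1,\dots,\Delta-1$ and then \texttt{right} is well-defined as a function of the history (it depends on $t$, hence on $h_t$), accumulates exactly $\Delta$ units of damage with probability one, and thus satisfies the constraint in \eqref{eq:og-constraint}. Its expected return equals $\Delta$.

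The second and main step is to show that every feasible stationary policy must pick \texttt{right} at $s_0$ with probability one, and hence earns return $0$. Let $\pi$ be stationary with $p:=\pi(\texttt{left}\mid s_0)>0$. Under $\pi$, the number of \texttt{left} choices before the first \texttt{right} is geometric with success probability $1-p$, so the total damage $\sum_t D_{t+1}$ is unbounded and in particular
\[
P_\pi\!\left(\sum_{t=0}^\infty D_{t+1}>\Delta\,\Big|\,S_0=s_0\right)=p^{\Delta+1}>0,
\]
which violates \eqref{eq:og-constraint}. Hence any feasible stationary $\pi$ must have $p=0$, giving expected return $0<\Delta$. Since no stationary policy matches the return of $\pi_h$, the set of stationary policies fails the adequacy definition.

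The step I expect to require the most care is the probabilistic computation showing that any nonzero probability on \texttt{left} forces a positive probability of violating the budget; the rest is essentially bookkeeping. A minor subtlety is ensuring the MDP satisfies the standing assumption that the absorbing state is reached under every policy: the always-\texttt{left} stationary policy never reaches the square, so strictly speaking I should either restrict to policies that do, or perturb the self-loop slightly (e.g., allow a small transition to the terminal state under \texttt{left}) so the assumption holds while preserving the counterexample. Either fix keeps the argument intact.
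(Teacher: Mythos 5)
Your proof is correct and follows essentially the same counterexample-based argument as the paper, using the same MDP from Figure~\ref{fig:counterexample}; you simply make explicit the probabilistic computation ($p^{\Delta+1}>0$ for any stationary policy placing mass $p>0$ on \texttt{left}) that the paper leaves implicit. Your observation about the always-\texttt{left} policy conflicting with the standing absorbing-state assumption, and the proposed perturbation fix, is a legitimate refinement but does not change the substance of the argument.
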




\section{Safe reinforcement learning with memory policies}\label{sec:adequacy}
Throughout this section we argue that in order to learn an optimal policy for \eqref{eq:1st-subeq} it suffices to consider the class of stationary policies that keep track of the accumulated damage along the trajectory. To this end, we consider an augmented MDP with a new state variable $K_t$ that incorporates the accumulated damage so far, which we call budget, and show it to be equivalent to the original MDP. {Tracking this cumulative quantity in the extended MDP is akin to some techniques used to solve unconstrained MDPs under risk-minimization criteria \citep{bauer}.} 
We finalize this section by showing stationary policies in the augmented MDP are adequate.
\begin{definition}[Budget]
For the original MDP $\mathcal{M}$ 
define the budget at time $t$ as the random variable
\begin{equation}
    K_t = \Delta-\sum_{\ell=0}^{t-1}D_{\ell+1}\,,\quad\quad \forall~t\geq 1\,,\label{eq:budget-def}
\end{equation}
with $K_0=\Delta$.
\end{definition}
This term can be seen as the \textit{remaining damage budget}, that is to say, how many more units of damage the agent can suffer along the trajectory while still satisfying \eqref{eq:og-constraint}. From the definition in \eqref{eq:budget-def} it follows that
    $K_{t+1}=K_t-D_{t+1},$
so the budget between successive time steps either stays the same or decreases by one only if damage occurs. We argue that this magnitude $K_t$ is a sufficient statistic for learning an optimal (feasible) policy, in the sense that stationary policies are adequate for a new MDP $\tilde{\mathcal{M}}$ with state variable $\tilde{S}=(S,K)$, which we define next.\\
\begin{definition}[Augmented MDP]
Given transition tuples $(S_t,A_t,S_{t+1},R_{t+1},D_{t+1})$ from $\mathcal{M}$, define the augmented MDP $\tilde{\mathcal{M}}$ as the one with tuples $(\tilde{S}_t,A_t,\tilde{S}_{t+1},R_{t+1},\tilde{D}_{t+1})$ where
\begin{equation}
\tilde{S}_t = (S_t,K_t)\,,\quad\quad\quad\quad \tilde{D}_{t+1}=\mathbf{1}\{K_t-D_{t+1}<0\}\,.
\end{equation}
\end{definition}
In $\tilde{\mathcal{M}}$ the state space is enlarged so as to consider the remaining damage budget $K_t$. Therefore the states $\tilde{S}$ now lie in $\tilde{\mathcal{S}}=\mathcal{S}\times\{\Delta,\Delta-1,\ldots,0\}$.~
The binary damage signal $\tilde{D}_{t+1}$  
is only one when the system is out of budget---i.e., it signifies failure to comply with \eqref{eq:og-constraint}. Figure \ref{fig:augmented-mdp} depicts the structure of this modified MDP. Each blob corresponds to a slice of the state space for fixed $K$, with transitions between states on the same slice occurring as long as the original damage signal $D_{t+1}=0$. When $D_{t+1}=1$ in the original MDP, the transition in the augmented MDP corresponds to decreasing $K_t$ by one. At the slice $\mathcal{S}\times\{0\}$ the system is critically compromised---performing one more unsafe state transition leads to failure (encoded as $\tilde{D}_{t+1}=1$ in the augmented MDP).\\
\begin{figure}[h!]
    \centering
    \includegraphics[width=.5\linewidth]{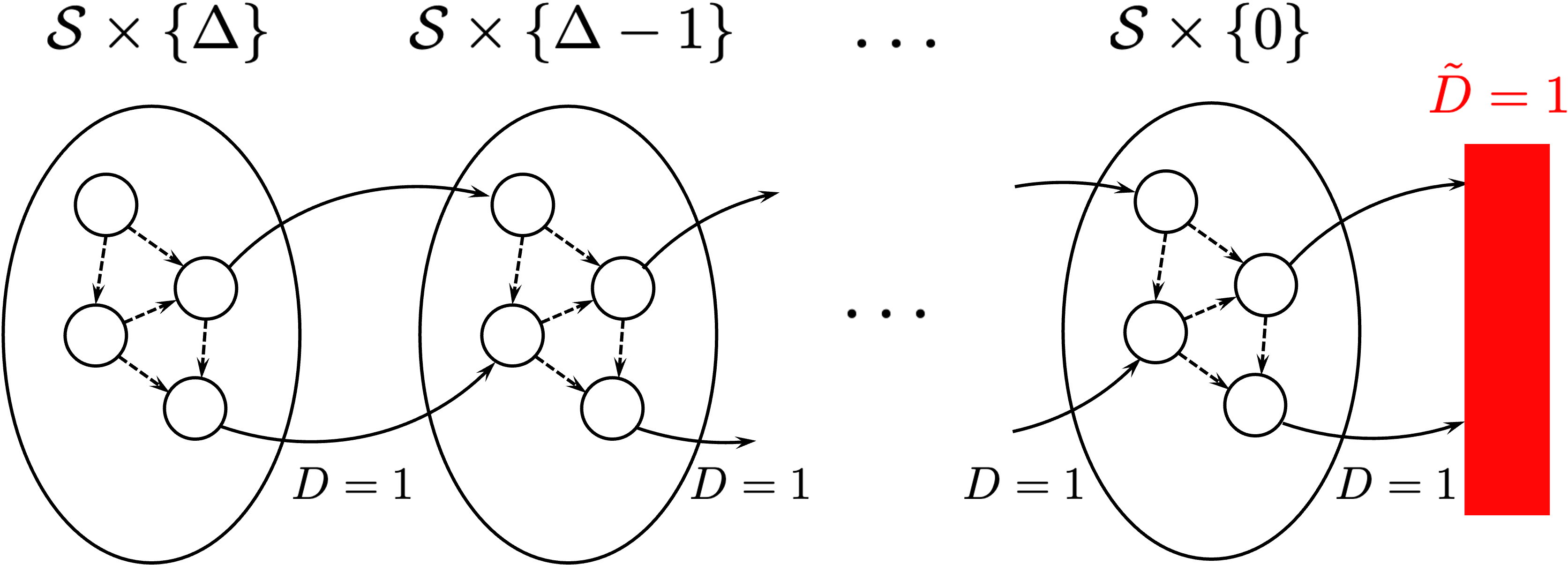}
    \caption{Illustration of transition dynamics in $\tilde{\mathcal{M}}$. Each disk corresponds to a partition of the state-space for fixed budget $K\in\{\Delta,\ldots, 0\}$. At any time step the system either retains the current budget or decreases it by one (when $D_{t+1}=1$), depicted by the solid arrows. Failure occurs when $K_t=0$ and the agent encounters damage ($\tilde{D}_{t+1}=1$ in red).}
    \label{fig:augmented-mdp}
\end{figure}

Consider the following optimization problem on $\tilde{\mathcal{M}}$:
\begin{subequations}
\begin{align}
    \max_{\tilde{\pi}\in\tilde{\Pi}_H} &~\mathbb{E}_{\tilde{\pi}, \tilde{\mathcal{M}}}\left[\sum_{t=0}^\infty R_{t+1} ~\bigg|~ (S_0,K_0)=(s,\Delta)\right] \label{eq:maximize-return-augmented}\\
    \text{s.t:}&~ P_{\tilde{\pi}}\left(\tilde{D}_{t+1} = 0\right)=1\quad\quad\forall t\geq 0\,,\label{eq:augmented-constraint}
\end{align}\label{eq:2nd-subeq}
\end{subequations}
where the first component of the initial state $\tilde{S}_0=(S_0,K_0)$ is the same as in \eqref{eq:maximize-return} and the second component is the total budget $\Delta$ in the original formulation. Maximization in this case is done over the set of history-dependent policies $\tilde{\Pi}_H$, whose elements are of the form $\tilde{\pi}(\cdot|\tilde{h}_t)$ with $\tilde{h}_t=(\tilde{s}_0,a_0,r_1,\tilde{d}_1,\tilde{s}_1,\ldots,\tilde{s}_{t-1},a_{t-1},r_{t},\tilde{d}_t,\tilde{s}_t)$. We explicitly write $\mathbb{E}_{\tilde{\pi},\tilde{\mathcal{M}}}[\cdot]$ in \eqref{eq:maximize-return-augmented} to denote that the expectation is taken with respect to the trajectory induced by $\tilde{\mathcal{M}}$. When there is no room for confusion we use the shorthanded version $\mathbb{E}_{\tilde{\pi}}[\cdot]$ instead.

\subsection{Adequacy of memory policies}
We first argue that the problem in the extended and original MDPs are equivalent. Specifically, any given feasible general policy $\tilde{\pi}_h$ for $\tilde{\mathcal{M}}$ can be readily mapped to a corresponding policy in $\mathcal{M}$ and vice versa. Secondly, we claim that stationary policies are adequate for $\tilde{\mathcal{M}}$. 
In this sense $K_t$ can be seen as a low-complexity descriptor of a policy, gathering all the relevant information in $\tilde{h}_t$.
\begin{lemma}[Equivalence of MDPs]\label{lem:mdp_equivalent}
The optimization problem \eqref{eq:2nd-subeq} in the augmented MDP  $\tilde{\mathcal{M}}$  is equivalent to the optimization problem \eqref{eq:1st-subeq} in the original MDP $\mathcal{M}$ .
\end{lemma}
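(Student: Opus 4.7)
The plan is to exhibit an explicit bijection between history-dependent policies on $\mathcal{M}$ and on $\tilde{\mathcal{M}}$ that preserves both the objective and the feasibility constraint. The key observation is that the budget $K_t$ in \eqref{eq:budget-def} is a deterministic function of the damage sequence $(D_1,\dots,D_t)$. Hence, starting from the fixed initial condition $(S_0,K_0)=(s,\Delta)$, every realized history $h_t=(s_0,a_0,r_1,d_1,s_1,\dots,s_t)$ on $\mathcal{M}$ determines a unique augmented history $\tilde{h}_t=(\tilde{s}_0,a_0,r_1,\tilde{d}_1,\tilde{s}_1,\dots,\tilde{s}_t)$ on $\tilde{\mathcal{M}}$ via $\tilde{s}_\ell=(s_\ell,k_\ell)$ and $\tilde{d}_{\ell+1}=\mathbf{1}\{k_\ell-d_{\ell+1}<0\}$; conversely $h_t$ can be recovered from $\tilde{h}_t$ by reading off the $s$-coordinates and recovering $d_{\ell+1}=k_\ell-k_{\ell+1}+\tilde{d}_{\ell+1}$. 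Denote this map by $\Phi$.

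Next I would lift $\Phi$ to a correspondence between policy classes. Given $\pi\in\Pi_H$, define $\tilde{\pi}\in\tilde{\Pi}_H$ by $\tilde{\pi}(a\mid \tilde{h}_t)=\pi(a\mid \Phi^{-1}(\tilde{h}_t))$, and do the symmetric construction in the reverse direction. Because $K_{t+1}=K_t-D_{t+1}$ is deterministic, all the randomness in $\tilde{\mathcal{M}}$'s one-step kernel is inherited from $p(s',r,d\mid s,a)$ in $\mathcal{M}$. Thus, for paired policies $\pi$ and $\tilde{\pi}$, the joint law of $(S_t,A_t,R_{t+1},D_{t+1})_{t\ge 0}$ is the same under $P_\pi$ on $\mathcal{M}$ and under $P_{\tilde{\pi}}$ on $\tilde{\mathcal{M}}$: at each step the action is drawn from the same conditional distribution and then $(s',r,d)$ from the same $p$, with $k_{t+1}$ appended deterministically.

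With this pushforward established, the equivalence of the two problems is a direct readoff. The rewards $R_{t+1}$ share a joint distribution across the two models, so the expected returns in \eqref{eq:maximize-return} and \eqref{eq:maximize-return-augmented} coincide. For feasibility, the event $\{\sum_{t=0}^\infty D_{t+1}\le\Delta\}$ is, sample-path-wise, the event $\{K_t\ge 0\;\forall t\}$, which by the definition $\tilde{D}_{t+1}=\mathbf{1}\{K_t-D_{t+1}<0\}$ is the event $\{\tilde{D}_{t+1}=0\;\forall t\ge 0\}$. Hence \eqref{eq:og-constraint} under $\pi$ and \eqref{eq:augmented-constraint} under $\tilde{\pi}$ describe the same measurable set, with the same probability one. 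Combining the objective and constraint statements shows that $\pi$ is optimal for \eqref{eq:1st-subeq} if and only if the paired $\tilde{\pi}$ is optimal for \eqref{eq:2nd-subeq}, with identical optimal value.

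The main obstacle I anticipate is bookkeeping rather than mathematical depth: one must carefully verify that $\Phi$ and its inverse are measurable with respect to the natural filtrations, and that the induced probability measures on trajectories truly agree under the policy lift. Everything else reduces to the fact that $K_t$ is a deterministic summary of the damage history and therefore adds no new randomness.
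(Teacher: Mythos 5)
Your proposal is correct and follows essentially the same route as the paper's proof: an explicit bijection on histories (the paper's map $g_t$), lifted to a bijection $f$ on policy classes, followed by an inductive verification that the pushforward trajectory laws agree, from which equality of expected returns and equivalence of the constraints (via the pathwise identity between $\{\sum_t D_{t+1}\le\Delta\}$ and $\{\tilde{D}_{t+1}=0\ \forall t\}$) both follow. The measurability/induction bookkeeping you flag at the end is exactly the content the paper's appendix fills in.
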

\begin{proof}[Sketch] We show a bijection $f:\Pi_H\rightarrow \tilde{\Pi}_H$ between the set of history-dependent polices for $\mathcal{M}$ and the one for $\tilde{\mathcal{M}}$ such that the expect return is matched under $f$.
Moreover, $\pi\in\Pi_H$ is feasible for \eqref{eq:og-constraint} if and only if $\tilde{\pi}=f(\pi)$ is feasible for \eqref{eq:augmented-constraint}. Hence the two optimization problems are equivalent.
\end{proof}
\begin{lemma}[Adequacy of stationary policies for $\tilde{\mathcal{M}}$]\label{lem:adequacy}The set of stationary policies of the augmented MDP $\tilde{\mathcal{M}}$ is adequate.
\begin{proof}
Given the fact that $\tilde{D}_{t+1}\in\{0,1\}$ almost surely, \eqref{eq:augmented-constraint} can be equivalently represented as $E_{\tilde\pi}\left[\sum_{t=0}^\infty \tilde{D}_{t+1}\right]=0$. This constraint along with \eqref{eq:maximize-return-augmented} lie in the usual formulation for shortest path problems in CMDPs, for which stationary policies are adequate (Ch.6 in \cite{altman}).
\end{proof}
\end{lemma}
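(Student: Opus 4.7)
The plan is to reduce the problem on $\tilde{\mathcal{M}}$ to a classical shortest-path CMDP with an expectation-based constraint, for which adequacy of stationary policies is a well-known result.

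First I would rewrite the probability-one constraint \eqref{eq:augmented-constraint} as a single expectation constraint. Because $\tilde{D}_{t+1}$ takes values in $\{0,1\}$ almost surely, the condition $P_{\tilde{\pi}}(\tilde{D}_{t+1}=0)=1$ for every $t\geq 0$ is equivalent to $\mathbb{E}_{\tilde{\pi}}[\tilde{D}_{t+1}]=0$ for every $t\geq 0$, and by nonnegativity these per-step identities are in turn equivalent to the aggregate identity $\mathbb{E}_{\tilde{\pi}}\bigl[\sum_{t=0}^{\infty}\tilde{D}_{t+1}\bigr]=0$. Thus \eqref{eq:2nd-subeq} is an instance of the standard ``maximize expected return subject to an expected cumulative cost constraint'' problem on $\tilde{\mathcal{M}}$.

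Second, I would invoke the shortest-path structure assumed throughout the paper (an absorbing terminal state is reached under every policy, and both $\tilde{\mathcal{S}}$ and $\mathcal{A}$ are finite). This guarantees that the objective $\mathbb{E}_{\tilde{\pi}}[\sum R_{t+1}]$ and the cumulative damage $\mathbb{E}_{\tilde{\pi}}[\sum \tilde{D}_{t+1}]$ are well-defined finite quantities, so the classical theory of constrained stochastic shortest-path problems applies. By the completeness result in Ch.~6 of \cite{altman}, given any feasible history-dependent policy $\tilde{\pi}_h\in\tilde{\Pi}_H$ there exists a stationary policy $\tilde{\pi}_s$ yielding the same expected return and the same (zero) expected cumulative damage. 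Unwinding the equivalence from the first paragraph, $\tilde{\pi}_s$ also satisfies \eqref{eq:augmented-constraint}, establishing adequacy.

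The main subtlety I anticipate is verifying that the hypotheses of the cited completeness theorem hold in our setting; the finiteness of $\tilde{\mathcal{S}}$, $\mathcal{A}$ and the shortest-path assumption handle this cleanly. If a self-contained proof is desired, a direct route is to introduce the state-action occupation measure $\rho^{\tilde{\pi}_h}(\tilde{s},a):=\mathbb{E}_{\tilde{\pi}_h}[\sum_{t=0}^{\infty}\mathbf{1}\{\tilde{S}_t=\tilde{s},A_t=a\}]$ and to define $\tilde{\pi}_s(a\mid\tilde{s}):=\rho^{\tilde{\pi}_h}(\tilde{s},a)/\sum_{a'}\rho^{\tilde{\pi}_h}(\tilde{s},a')$ at every $\tilde{s}$ with positive total occupation (arbitrary elsewhere). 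Since both the objective and the constraint are linear functionals of $\rho$, and a standard flow argument gives $\rho^{\tilde{\pi}_s}=\rho^{\tilde{\pi}_h}$, this stationary policy matches the return and preserves the zero-damage property, recovering the desired conclusion without appealing to the general theorem.
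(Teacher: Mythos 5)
Your proof follows the same route as the paper's: both convert the probability-one constraint into the expectation constraint $\mathbb{E}_{\tilde\pi}\left[\sum_{t=0}^\infty \tilde{D}_{t+1}\right]=0$ using that $\tilde{D}_{t+1}$ is binary, and then invoke the adequacy of stationary policies for constrained shortest-path problems from Ch.~6 of \cite{altman}. Your additional occupation-measure sketch is a valid self-contained alternative, but the main argument is essentially identical to the paper's.
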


We finish the section by decomposing the action-value function that rises from \eqref{eq:2nd-subeq} in one term focused on return and the other focused on feasibility. Then we show that the feasible stationary policies in $\tilde{\mathcal{M}}$---and therefore the 1-memory policies in $\mathcal{M}$---can be completely characterized by this barrier.
\subsection{Characterizing feasible policies with a barrier function}
Consider for a stationary policy $\tilde{\pi}$ the extended action-value function:
\begin{equation}
    Q_{\tilde{\pi}}(s,k,a) := \mathbb{E}_{\tilde{\pi}}\left[\sum_{\ell=t}^\infty R_{\ell+1}+\mathbb{I}\left\{\sum_{\ell=t}^\infty D_{\ell+1}\leq K_t\right\}~\bigg|~S_t=s, K_t=k, A_t=a\right]\,, \label{eq:q-pi}
\end{equation}
where we introduce the barrier-indicator  $\mathbb{I}\{x\}=0$ if $x$ is true and $\mathbb{I}\{x\}=-\infty$ otherwise.
We can find an optimal policy for \eqref{eq:1st-subeq} by solving
$
    \max_{\tilde{\pi}\in\tilde{\Pi}_S} \mathbb{E}_{a\sim\tilde{\pi}}\left[Q_{\tilde{\pi}}(s,\Delta,a)\right]
$, where $\tilde{\Pi}_S$ is the set of stationary policies in $\tilde{\mathcal{M}}$.
For simplicity, it is useful to specify a function that encodes for the feasibility of the whole state-action space under a policy. This is the  barrier action-value function:
\begin{equation}
    B_{\tilde{\pi}}(s,k,a):=\mathbb{E}_{\tilde{\pi}}\left[\mathbb{I}\left\{\sum_{l=t}^\infty D_{l+1}\leq K_t\right\}~\bigg|~S_t=s, K_t=k, A_t=a\right]~.\label{eq:b-pi}
\end{equation}
This function either takes values zero or $-\infty$, with zero indicating that policy $\tilde{\pi}$ is guaranteed to be feasible when starting from $(s,k,a)$ and $-\infty$ meaning that, with positive probability, more than $k$ units of damage will be seen along the trajectory. This might be a consequence of either 
having too small a budget or a poor policy. The usefulness for defining $B_{\tilde{\pi}}$ is that $Q_{\tilde{\pi}}$ can be decomposed in terms of itself and $B_{\tilde{\pi}}$, which decouples optimality and feasibility, as is shown next.
\begin{lemma}[Barrier decomposition and Bellman equation]\label{lem:decomp-and-bellman}
Let $\tilde{\mathcal{M}}$ be an MDP with an absorbing state 
. Let $\tilde{\pi}$ be a policy in $\mathcal{M}$ such that under $\tilde\pi$ the absorbing state is eventually reached. If rewards $R_{t+1}$ are bounded almost surely for all $t$, then 
\begin{equation}
    Q_{\tilde{\pi}}(s,k,a)=Q_{\tilde{\pi}}(s,k,a)+B_{\tilde{\pi}}(s,k,a)~.
\end{equation}
Additionally, the optimal barrier function $B_*$ satisfies the Bellman equation
\begin{equation}
    B_*(s,k,a)=\mathbb{E}\left[\mathbb{I}\{\tilde{D}_{t+1}\}+\max_{a'\in\mathcal{A}}B_*(S_{t+1},K_{t+1},a')\right]~.\label{eq:bellman-B}
\end{equation}
\end{lemma}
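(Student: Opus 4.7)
The plan is to handle the two claims separately. The decomposition is essentially a bookkeeping exercise based on linearity of expectation, while the Bellman equation follows from a one-step unrolling of the barrier indicator combined with the Markov property of $\tilde{\mathcal{M}}$ and the standard principle of optimality.

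For the decomposition, I would split the integrand in the definition \eqref{eq:q-pi} of $Q_{\tilde{\pi}}$ into its reward piece $\sum R_{\ell+1}$ and its barrier piece $\mathbb{I}\{\sum D_{\ell+1}\leq K_t\}$, and apply linearity of expectation to identify $Q_{\tilde{\pi}}$ as the sum of a reward-only action-value and $B_{\tilde{\pi}}$. The bounded-reward hypothesis together with the eventual-absorption assumption renders the reward part finite almost surely, so the extended-real arithmetic (a finite value added to a possibly $-\infty$ term) is unambiguous. (As displayed, the identity has $Q_{\tilde{\pi}}$ on both sides, which I treat as a typographical slip for $Q_{\tilde{\pi}} = V^R_{\tilde{\pi}} + B_{\tilde{\pi}}$, where $V^R_{\tilde{\pi}}$ is the reward-only action-value.)

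For the Bellman equation, the central step is the one-step identity
\begin{equation*}
\mathbb{I}\Big\{\sum_{\ell=t}^\infty D_{\ell+1} \leq K_t\Big\} = \mathbb{I}\{\tilde D_{t+1}\} + \mathbb{I}\Big\{\sum_{\ell=t+1}^\infty D_{\ell+1} \leq K_{t+1}\Big\},
\end{equation*}
which I would prove by noting that, because $K_{t+1}=K_t-D_{t+1}$, the left event is the conjunction of $\{K_{t+1}\geq 0\}$ (i.e.\ $\tilde D_{t+1}=0$) with $\{\sum_{\ell\geq t+1} D_{\ell+1}\leq K_{t+1}\}$, and the $\{0,-\infty\}$-valued indicator is additive over conjunctions. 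Conditioning on $(S_{t+1},K_{t+1})$, invoking the tower property and the Markov structure of $\tilde{\mathcal{M}}$, and taking the supremum over stationary policies then produces \eqref{eq:bellman-B}.

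The main obstacle is justifying the principle of optimality in this extended-real setting, since the one-step ``reward'' $\mathbb{I}\{\tilde D_{t+1}\}$ takes values in $\{0,-\infty\}$ and standard dynamic-programming theorems assume integrable rewards. The resolution I would pursue exploits the binary nature of $B_*$: since $B_*$ itself takes only the values $0$ and $-\infty$, the Bellman equation collapses into a reachability statement---namely, $B_*(s,k,a)=0$ if and only if $\tilde D_{t+1}=0$ almost surely from $(s,a)$ and there exists $a'$ with $B_*(S_{t+1},K_{t+1},a')=0$ on the support of the next state. This binary version can be verified directly by induction on the time to absorption, bypassing any general DP theorem and making the interchange of $\max$ and $\mathbb{E}$ in \eqref{eq:bellman-B} rigorous.
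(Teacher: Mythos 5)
The paper itself provides no proof of this lemma: it is stated in the main text without even a proof sketch, and the appendix (which contains detailed proofs of Lemmas~\ref{lem:mdp_equivalent}, \ref{lem:sample-complexity}, \ref{lem:consistent-kernels}, Theorems~\ref{thm:k-star}, \ref{thm:k-star-convergence}, and the monotonicity properties \eqref{eq:B-monot-1}--\eqref{eq:B-monot-2}) omits this one entirely. So there is nothing to compare against, and your proposal should be judged on its own terms; on those terms it is sound and supplies exactly what is missing. You correctly read the displayed identity $Q_{\tilde{\pi}}=Q_{\tilde{\pi}}+B_{\tilde{\pi}}$ as a typo for the decomposition into a reward-only value plus the barrier, and your justification of the extended-real arithmetic (finite reward part plus a $\{0,-\infty\}$-valued term) is the right point to make. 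Your one-step identity is correct: since $D_{\ell}\geq 0$, the event $\{\sum_{\ell\geq t+1}D_{\ell+1}\leq K_{t+1}\}$ already implies $\{K_{t+1}\geq 0\}$, so the conjunction collapses and the $\{0,-\infty\}$-indicator is additive over it. You also correctly flag the real technical issue, namely that the interchange of $\max$ and $\mathbb{E}$ cannot be waved through with a generic dynamic-programming theorem when the stage cost takes the value $-\infty$, and reducing \eqref{eq:bellman-B} to a binary reachability statement is the right fix. The one soft spot is your proposed ``induction on the time to absorption'': absorption is only eventual, not uniformly bounded, so that induction is not well-founded as stated; you should instead establish the reachability characterization by the same monotone fixed-point argument the paper uses for $k_*$ in Theorem~\ref{thm:k-star-convergence} (or by conditioning on the a.s.\ finite first-exit time), after which the rest of your argument goes through.
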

This barrier function satisfies the following monotonicity properties:
\begin{align}
    &B_\pi(s,k,a)=0 \Longrightarrow B_\pi(s,k+i,a)=0\,,&\text{(Safe and more budget $\rightarrow$ safe.)}\label{eq:B-monot-1}\\
    &B_\pi(s,k,a)=-\infty\Longrightarrow B_\pi(s,k-i,a)=-\infty\,.&\text{(unsafe and less budget $\rightarrow$ unsafe.)}\label{eq:B-monot-2}
\end{align}

\subsection{Characterizing feasible policies via minimal budget}
As commented previously, $B_*$ completely characterizes the feasibility of every $(s,k,a)$ triplet. The safety of a  state-action pair $(s,a)$ is conditioned on the agent's remaining budget $k$. With this idea in mind we can define the minimal required budget at each $(s,a)$ as follows.

\begin{definition}[Minimal budget]
The minimal required budget $k_*$ that guarantees feasibility for an $(s,a)$ pair is
\begin{equation}
    k_*(s,a) = \min_{0\leq k\leq\infty} ~k~~\text{s.t.:}~B_*(s,k,a)= 0\,.\label{eq:k_star}
\end{equation}
\end{definition}
This quantity $k_*$ serves as a proxy for safety. A state-action pair 
is safe if the agent's budget $k$ is at least $k_*$, and thus $k_*$ completely characterizes the set of feasible, stationary policies:

\begin{theorem}[Characterization of feasible, stationary policies]\label{thm:feasible-policies}
The set of feasible, stationary policies for \eqref{eq:2nd-subeq} is
\begin{equation}
    \tilde{\Pi}_S^F = \{\tilde{\pi}: \tilde{\pi}(a|s,k)=0\quad\forall a: k_*(s,a)>k\}.
\end{equation}
\begin{proof}
[Sketch] The proof is straightforward and follows from the definition of $k_*$ and monotonicity properties \eqref{eq:B-monot-1}--\eqref{eq:B-monot-2}.
\end{proof}
\end{theorem}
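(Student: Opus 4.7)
The plan is to prove the two inclusions making up the claimed set equality separately, using as the central fact that, by the definition \eqref{eq:k_star} together with the monotonicity properties \eqref{eq:B-monot-1}--\eqref{eq:B-monot-2}, the optimal barrier satisfies $B_*(s,k,a)=0$ iff $k\geq k_*(s,a)$ and $B_*(s,k,a)=-\infty$ otherwise. Consequently, the condition ``$k_*(s,a)>k$'' is synonymous with ``no stationary policy starting from $(s,k,a)$ can satisfy \eqref{eq:augmented-constraint}.''

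For the inclusion $\supseteq$, I would take a stationary $\tilde{\pi}$ whose support at each $(s,k)$ is restricted to actions with $k_*(s,a)\leq k$ and show that safety propagates by unrolling the Bellman equation \eqref{eq:bellman-B}. Since $B_*(s,k,a)=0$ forces both $\tilde{D}_{t+1}=0$ and $\max_{a'}B_*(S_{t+1},K_{t+1},a')=0$ almost surely, every successor state $(S_{t+1},K_{t+1})$ admits a safe action, and by hypothesis $\tilde{\pi}$ picks only such actions. Combining this recursion with the absorbing-state assumption and the barrier decomposition of Lemma~\ref{lem:decomp-and-bellman} yields $B_{\tilde{\pi}}(s_0,\Delta,a_0)=0$ for every $a_0$ in the support at the initial state, which is exactly the statement that \eqref{eq:augmented-constraint} holds.

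For the inclusion $\subseteq$, I would argue by contrapositive. Suppose $\tilde{\pi}(a|s,k)>0$ for some action $a$ with $k_*(s,a)>k$ at a state $(s,k)$ reached from $(s_0,\Delta)$ with positive probability under $\tilde{\pi}$. Then $B_*(s,k,a)=-\infty$, and since $B_*$ is the optimum of $B_{\tilde{\pi}}$ over stationary policies, also $B_{\tilde{\pi}}(s,k,a)=-\infty$. Hence from $(s,k,a)$ the trajectory accumulates more than $k$ units of damage with positive probability, triggering $\tilde{D}_{t'+1}=1$ for some $t'$; chaining this with the positive probability of reaching $(s,k)$ and selecting $a$ contradicts \eqref{eq:augmented-constraint}.

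The main obstacle I anticipate is cosmetic rather than mathematical. The theorem pins $\tilde{\pi}(a|s,k)=0$ at \emph{every} $(s,k)$, while feasibility only depends on reachable states; the cleanest resolution is to identify stationary policies that agree on all reachable $(s,k)$ and work with equivalence classes, after which both inclusions become exact. Once this bookkeeping is in place and $k_*$ is interpreted as the jump threshold of $B_*$, each direction collapses to a one-line observation, in line with the authors' remark that the argument is straightforward.
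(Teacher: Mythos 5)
Your proposal is correct and follows essentially the same route the paper indicates: the paper gives only the one-line sketch (definition of $k_*$ plus the monotonicity properties \eqref{eq:B-monot-1}--\eqref{eq:B-monot-2}, i.e.\ $B_*(s,k,a)=0$ iff $k\geq k_*(s,a)$), and your two inclusions, together with the Bellman recursion for $B_*$ in the forward direction and the contrapositive/reachability argument in the reverse direction, are exactly the details that sketch leaves implicit. Your observation that the stated set equality should really be read modulo agreement on reachable $(s,k)$ pairs is a fair and correctly handled refinement, not a gap.
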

Notice that $k_*$ is intrinsic to the MDP.  We focus on learning this quantity, first establishing a Bellman-like recursion for $k_*$ and then deriving an Algorithm that provably converges to it.
\begin{theorem}[Recursion for $k_*$]\label{thm:k-star}
For each $(s,a)$, the minimal budget satisfies the recursion: 
\begin{equation}
    k_*(s,a)=\max_{s':p(s'\mid s,a)>0}\left[\mathbf{1}_d(s,a,s')+\min_{a'}k_*(s',a')\right]\,,\label{eq:k_star_recursion}
\end{equation}
where $\mathbf{1}_d(s,a,s'):=\mathbf{1}\{p(d=1\mid s,a,s')>0\}$ and $\mathbf{1}\{x\}=1$ if $x$ is true and $0$ otherwise.
\begin{proof}[Sketch.] The proof relies on decomposing $B_*(s,k,a)$ and evaluating it on $k_*(s,a)$.
\end{proof}
\end{theorem}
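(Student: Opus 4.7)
The plan is to derive the recursion by unpacking the Bellman equation \eqref{eq:bellman-B} for $B_*$ and combining it with the definition \eqref{eq:k_star} of $k_*$ together with the monotonicity properties \eqref{eq:B-monot-1}--\eqref{eq:B-monot-2}. The key observation that makes the argument clean is that both the barrier indicator $\mathbb{I}\{\cdot\}$ and $B_*$ take values in $\{0,-\infty\}$, so an expectation of such quantities equals $0$ if and only if the integrand itself is $0$ almost surely.

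First, I would fix $(s,a)$, condition on $(S_t,K_t,A_t)=(s,k,a)$, and expand \eqref{eq:bellman-B} as
\begin{equation*}
B_*(s,k,a) \;=\; \sum_{s',d}p(s',d\mid s,a)\Bigl[\mathbb{I}\{k-d\geq 0\}+\max_{a'}B_*(s',k-d,a')\Bigr].
\end{equation*}
Because each summand lies in $\{0,-\infty\}$, $B_*(s,k,a)=0$ holds if and only if, for every $(s',d)$ with $p(s',d\mid s,a)>0$, one has simultaneously $k\geq d$ and $\max_{a'}B_*(s',k-d,a')=0$. Using the definition of $k_*$ in \eqref{eq:k_star} together with the monotonicity property \eqref{eq:B-monot-1}, the latter is equivalent to $k-d\geq \min_{a'}k_*(s',a')$.

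Next, I would note that for each $s'$ with $p(s'\mid s,a)>0$, the largest $d\in\{0,1\}$ appearing with positive probability under $p(\cdot\mid s,a,s')$ is precisely $\mathbf{1}_d(s,a,s')$. Since the inequality $k\geq d+\min_{a'}k_*(s',a')$ is tighter for larger $d$, the binding constraint at each such $s'$ is
\begin{equation*}
k \;\geq\; \mathbf{1}_d(s,a,s')+\min_{a'}k_*(s',a'), \qquad \forall\, s':\, p(s'\mid s,a)>0.
\end{equation*}
The smallest $k$ meeting all these inequalities is exactly the right-hand side of \eqref{eq:k_star_recursion}, and by \eqref{eq:k_star} this smallest $k$ is $k_*(s,a)$, which yields the claimed recursion.

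The step I expect to require the most care is turning ``the Bellman expectation equals $0$'' into ``every positive-probability realization of the integrand equals $0$.'' This hinges on the $\{0,-\infty\}$-valued structure of $\mathbb{I}$ and $B_*$, which rules out the sort of cancellations one would have to worry about with real-valued integrands; the finite state/action assumption then guarantees that the sum over $(s',d)$ is well-defined. After that hurdle, reducing the joint worst-case over $(s',d)$ to a worst-case over $s'$ alone is essentially bookkeeping through monotonicity, and the identification with $k_*(s,a)$ is immediate from its definition as an infimum.
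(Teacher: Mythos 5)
Your argument is correct, and it takes a genuinely different (and arguably cleaner) route than the paper's. The paper's appendix proof never touches the Bellman equation \eqref{eq:bellman-B}: it works directly with the probabilities $\mathbb{P}_\pi\left(\sum_{t}D_{t+1}\leq K \mid S_0=s, A_0=a\right)$, conditions on the first transition $(S_1,D_1)$ by hand, and establishes the two inequalities $k_*\leq\text{RHS}$ and $k_*\geq\text{RHS}$ separately, each by contradiction, with a separate treatment of the case $k_*(s,a)=0$. You instead take Lemma \ref{lem:decomp-and-bellman} as given, expand the one-step expectation, and exploit the $\{0,-\infty\}$ lattice structure to turn ``expectation equals zero'' into ``every positive-probability branch equals zero,'' which yields the full iff-characterization $B_*(s,k,a)=0 \iff k\geq \max_{s'}\left[\mathbf{1}_d(s,a,s')+\min_{a'}k_*(s',a')\right]$ in one pass; minimizing over $k$ then gives the recursion without any case split or contradiction. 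This is in fact closer to what the paper's own proof \emph{sketch} advertises than what its appendix delivers. What your route buys is brevity and a uniform treatment of $k_*=0$ and $k_*\geq 1$; what it costs is a dependency on the Bellman equation for $B_*$, which the paper states in Lemma \ref{lem:decomp-and-bellman} but never proves in the appendix --- the paper's longer probabilistic argument effectively re-derives that one-step decomposition from first principles, so it is self-contained where yours is not. Two small points to tidy up: you should state the convention $0\cdot(-\infty)=0$ explicitly when expanding the expectation, and note that the argument extends verbatim to the extended-valued case where the right-hand side (and hence $k_*(s,a)$) equals $\infty$, which neither you nor the paper addresses explicitly.
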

The recursion in \eqref{eq:k_star_recursion} can be seen as a fixed point of an operator $\mathcal{T}_p$ that acts on budgets $k$ (for a given transition kernel $p(s',d|s,a)$). We define this operator next and analyze some of its properties.

\begin{definition}[The budget operator $\mathcal{T}_p$]
    Given a transition kernel $p$, define the operator $\mathcal{T}_p$ acting on the extended natural vector $\bar{\mathbb{N}}^{\mathcal{S}\times\mathcal{A}}$ with $\bar{\mathbb{N}}=\mathbb{N}\cup\{\infty\}$ as $\mathcal{T}_p:\bar{\mathbb{N}}^{(\mathcal{S}\times\mathcal{A})}\rightarrow \bar{\mathbb{N}}^{(\mathcal{S}\times\mathcal{A})} :$
\begin{equation}
    (\mathcal{T}_p~k)(s,a) := \max_{s':p(s'\mid s,a)>0}\left[\mathbf{1}_d(s,a,s')+\min_{a'}k(s',a')\right]\,, \quad \forall (s,a)\in\mathcal{S}\times\mathcal{A}\,. \label{eq:operator}
\end{equation}
\end{definition}
Notice that here we are making explicit the dependency of $\mathcal{T}_p$ with the transition kernel $p$ from $\mathcal{M}$. 
Later  we will argue that $k_*$ can be learned even if the learner has no access to $p$, as long as it knows a proper surrogate kernel $\hat{p}$. In that case this notation will allow for the difference of $\mathcal{T}_p$ and $\mathcal{T}_{\hat{p}}$. However, for the remainder of this section we spare the subscript and speak just of $\mathcal{T}$.

We would like to make use of this operator to learn $k_*$. The idea is straightforward: for a given kernel $p$, start from $k=0$ and keep applying $\mathcal{T}_p$ until reaching a fixed point. 
But there are many fixed points of \eqref{eq:operator}. Indeed, one can easily check that if $k^\dag$ is a fixed point so is $k^\dag + \mathbf{1}c, c\in\bar{\mathbb{N}}$. Therefore the question still remains as to whether this procedure converges to $k_*$. We will show this is the case, arguing that Algorithm \ref{alg:in-kernel-out-budget} converges to $k_*$.

\RestyleAlgo{boxed,algoruled}
\normalem
\begin{algorithm}
\caption{Fixed point budget iteration}
\KwIn{
Transition kernel $p$ from $\mathcal{M}$}
\KwResult{
$k_*$ for $\mathcal{M}$
}
$k_0(s,a)\leftarrow 0\quad\forall (s,a)\in\mathcal{S}\times\mathcal{A}$\;


\For{$n=0,1,\ldots$}{
\For{$(s,a)\in\mathcal{S}\times\mathcal{A}$}{
    $k_{n+1}(s,a) \leftarrow \max_{s':p(s'\mid s,a)>0}\left[\mathbf{1}\{p(d=1\mid s,a,s')>0\}+\min_{a'}k_n(s',a')\right] $ 
    }
}
\label{alg:in-kernel-out-budget}
\end{algorithm}

\begin{theorem}[Convergence to $k_*$]\label{thm:k-star-convergence}
Define $k_\infty\in\bar{\mathbb{N}}^{\mathcal{S}\times\mathcal{A}}$ as the limit of Algorithm \ref{alg:in-kernel-out-budget}, that is $k_\infty(s,a)=\lim_{n\rightarrow\infty}k_n(s,a)$ for all $(s,a)\in\mathcal{S}\times\mathcal{A}$. Let the input of Algorithm \ref{alg:in-kernel-out-budget} be the true transition kernel $p$ of MDP $\mathcal{M}$. Then the iterates of this algorithm converge to $k_*$:
    $$
    k_\infty(s,a):=\lim_{n\rightarrow\infty}k_n(s,a)=k_*(s,a)\,,\quad\forall (s,a)\in\mathcal{S}\times\mathcal{A}\,.
    $$
\begin{proof}
[Sketch] The proof proceeds by induction, starting from the $(s,a)$ pairs for which $k_*(s,a)=0$.  
\end{proof}
\end{theorem}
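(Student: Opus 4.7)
My plan is to show $k_\infty$ and $k_*$ coincide by arguing they are both equal to the least fixed point of $\mathcal{T}_p$ on the complete lattice $\bar{\mathbb{N}}^{\mathcal{S}\times\mathcal{A}}$. This splits naturally into proving convergence of the iteration to a fixed point, an easy upper bound $k_\infty \leq k_*$, and the substantive lower bound $k_\infty \geq k_*$ obtained by constructing a witness policy from $k_\infty$ itself.

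First I would check that $\mathcal{T}_p$ is monotone with respect to the pointwise order, which is immediate from the monotonicity of $\min$ and $\max$ in their arguments. Since $k_0 \equiv 0$ is the minimum element and $k_1 = \mathcal{T}_p k_0 \geq k_0$, induction plus monotonicity gives that $\{k_n\}$ is pointwise non-decreasing, and therefore converges to some $k_\infty \in \bar{\mathbb{N}}^{\mathcal{S}\times\mathcal{A}}$. Because $\max$ and $\min$ over finite sets commute with monotone limits, passing to the limit in the recursion $k_{n+1} = \mathcal{T}_p k_n$ gives $\mathcal{T}_p k_\infty = k_\infty$, so $k_\infty$ is a fixed point of $\mathcal{T}_p$. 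The upper bound is then an easy monotonicity argument: by Theorem~\ref{thm:k-star}, $\mathcal{T}_p k_* = k_*$, and since $k_0 = 0 \leq k_*$, induction yields $k_n \leq k_*$ for every $n$, hence $k_\infty \leq k_*$.

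The main step is the reverse inequality. Define the stationary deterministic policy $\pi^*(s) \in \arg\min_{a'} k_\infty(s,a')$, which is well-defined because $|\mathcal{A}|<\infty$. I claim that starting from $(S_0,A_0)=(s,a)$ with initial budget $K_0 = k_\infty(s,a)$ and then playing $A_{t+1}=\pi^*(S_{t+1})$, the invariant $K_t \geq k_\infty(S_t,A_t)$ holds almost surely for all $t$. The inductive step uses the fixed-point equation at $(S_t,A_t)$:
\[
k_\infty(S_t,A_t) \;\geq\; \mathbf{1}_d(S_t,A_t,S_{t+1}) + \min_{a'} k_\infty(S_{t+1},a') \;\geq\; D_{t+1} + k_\infty(S_{t+1},A_{t+1}),
\]
where the first inequality is simply dropping the max to the realized next state $S_{t+1}$ (which has $p(S_{t+1}\mid S_t,A_t) > 0$ a.s.), and the second uses that $D_{t+1} \leq \mathbf{1}_d(S_t,A_t,S_{t+1})$ almost surely by definition of $\mathbf{1}_d$. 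Consequently $K_{t+1} = K_t - D_{t+1} \geq k_\infty(S_{t+1},A_{t+1}) \geq 0$. Writing $K_t = k_\infty(s,a) - \sum_{\ell=0}^{t-1}D_{\ell+1}$ and letting $t\to\infty$ (monotone convergence, since $D_{\ell+1}\in\{0,1\}$) yields $\sum_{t=0}^\infty D_{t+1} \leq k_\infty(s,a)$ almost surely. Hence $\pi^*$ is feasible from $(s,a)$ with budget $k_\infty(s,a)$, so by the minimality in the definition of $k_*$ we get $k_*(s,a)\leq k_\infty(s,a)$, closing the two-sided bound.

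The step I expect to require the most care is the policy-construction argument, specifically keeping the $\infty$ case clean: if $k_\infty(s,a)=\infty$ then by the upper bound $k_*(s,a)=\infty$ as well and nothing needs to be done, while if $k_\infty(s,a)<\infty$, the fixed-point identity forces $\min_{a'} k_\infty(S_{t+1},a')<\infty$ for every realizable successor, so the induction stays in the finite regime where $\pi^*$ is meaningful. The rest reduces to bookkeeping: verifying $D_{t+1}\leq \mathbf{1}_d$ a.s., confirming that $\max$/$\min$ over finite sets commute with monotone limits in $\bar{\mathbb{N}}$, and invoking Theorem~\ref{thm:k-star} for the fixed-point property of $k_*$.
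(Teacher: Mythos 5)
Your proposal is correct, and the hard direction is organized genuinely differently from the paper. Both arguments share the same skeleton: monotonicity of $\mathcal{T}$ gives $k_n\leq k_*$ for all $n$ and hence $k_\infty\leq k_*$, and the reverse inequality is witnessed by the greedy policy $\pi^*(s)\in\arg\min_a k_\infty(s,a)$. The paper, however, proves $k_\infty\geq k_*$ by induction on the budget level $L$: it establishes the base case $k_*(s,a)=0\iff k_\infty(s,a)=0$, then in the inductive step tracks the first time $t_0$ at which $k_\infty(S_t,A_t)$ strictly drops along a trajectory, invokes the inductive hypothesis to equate $k_\infty$ with $k_*$ on the post-$t_0$ tail, and concludes the trajectory incurs at most $L+1$ damage. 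You instead telescope the single inequality $k_\infty(S_t,A_t)\geq D_{t+1}+k_\infty(S_{t+1},A_{t+1})$ (valid almost surely since $D_{t+1}\leq\mathbf{1}_d(S_t,A_t,S_{t+1})$ and the realized successor lies in the set over which the max is taken), obtaining $\sum_{t=0}^{\infty}D_{t+1}\leq k_\infty(s,a)$ in one stroke; this replaces the level-by-level induction and the $t_0$ bookkeeping with a Lyapunov-style invariant $K_t\geq k_\infty(S_t,A_t)$, and it handles all finite levels simultaneously. Your approach also makes explicit a step the paper uses only implicitly, namely that $k_\infty$ is itself a fixed point of $\mathcal{T}$ (justified by the monotone convergence of $k_n$ and the finiteness of $\mathcal{S}$ and $\mathcal{A}$), and your treatment of the $k_\infty(s,a)=\infty$ case via the upper bound matches the paper's. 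The net effect is a shorter and arguably cleaner proof of the same statement; what the paper's induction buys in exchange is an explicit description of how the sublevel sets of $k_*$ are discovered by successive iterations of Algorithm~\ref{alg:in-kernel-out-budget}, which your limit argument does not expose.
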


Unfortunately, it might be possible that we do not have access to the true kernel $p$. The next section focuses on learning $k_*$ as long as one knows a \textit{surrogate} kernel $\hat{p}$.
\section{Sample complexity for learning the minimal budget}\label{sec:sample-complexity}

We start by defining a consistent kernel $\hat{p}$ of $p$ and show that Algorithm \ref{alg:in-kernel-out-budget} converges to $k_*$ under input $\hat{p}$.
\begin{definition}[Consistent kernel]
Given a transition kernel $p$, $\hat{p}$ is a consistent kernel of $p$ if and only if $\operatorname{sign}\big(\hat{p}(s',d|s,a)\big) = \operatorname{sign}\big({p}(s',d|s,a)\big)~~\forall (s,a,s',d)\in\mathcal{S}\times\mathcal{A}\times\mathcal{S}\times\{0,1\}.$
\end{definition}

Throughout what follows we assume access to a generative model or a sampler, which allows to sample transitions $(s',d)\sim p(\cdot|s,a)$. By collecting $N$ samples at each state-action pair, we can build an empirical model $\hat{p}$ of the transition kernel $p$, counting the fraction of transitions to each $(s',d)$ from $(s,a)$,  as depicted in Algorithm \ref{alg:kernel-builder}. The number of samples needed to build a consistent kernel with arbitrarily high probability is elucidated in Lemma \ref{lem:sample-complexity}.
\begin{algorithm}
\caption{Kernel builder}
\KwIn{Transition kernel $p$ from $\mathcal{M}$, number of sample queries $N$.}
\KwResult{Empirical kernel $\hat{p}$.}
\For{$(s,a)\in\mathcal{S}\times\mathcal{A}$}{
    Sample $N$ transitions $(s',d)\sim p(\cdot|s,a)$
}
Build estimate kernel $\hat{p}(s',d|s,a)=\frac{\operatorname{count}(s',d;s,a)}{N}\quad\forall s'\in\mathcal{S}, d\in\{0,1\}, s\in\mathcal{S}, a\in\mathcal{A}$\;
\label{alg:kernel-builder}
\end{algorithm}
\begin{lemma}[Sample complexity for Algorithm \ref{alg:kernel-builder}]\label{lem:sample-complexity}
Assume that $p(s',d|s,a)=0$ or $p(s',d|s,a)\geq\mu>0$, for every $(s,a,s',d)\in\mathcal{S}\times\mathcal{A}\times\mathcal{S}\times\{0,1\}$. Then with probability at least $1-\delta$,
\texttt{Kernel builder} produces a consistent kernel $\hat{p}$ of $p$, provided that
\begin{equation}
N\geq \frac{1}{\mu}\log\frac{2|\mathcal{S}|^2|\mathcal{A}|}{\delta}\,. \label{eq:sample-comp-consistent-kernel}
\end{equation}
\begin{proof}
[Sketch] Follows from taking a union bound on the probability that Algorithm \ref{alg:kernel-builder} fails to produce a consistent kernel.
\end{proof}
\end{lemma}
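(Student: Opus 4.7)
The plan is to reduce the event ``$\hat p$ is inconsistent with $p$'' to a finite collection of ``miss'' events, each an elementary Bernoulli tail, and then apply a single union bound.

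First I would observe that consistency can fail in only one direction. If $p(s',d\mid s,a)=0$, then the sampler can never emit $(s',d)$ when queried at $(s,a)$, so $\operatorname{count}(s',d;s,a)=0$ almost surely and hence $\hat p(s',d\mid s,a)=0$; the sign is matched for free on this branch. Consequently, the event ``$\hat p$ is not consistent with $p$'' coincides with the event that there exists some quadruple $(s,a,s',d)$ with $p(s',d\mid s,a)\geq\mu$ for which \emph{none} of the $N$ independent samples drawn from $p(\cdot\mid s,a)$ returns the outcome $(s',d)$.

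For any such quadruple, the miss probability is $(1-p(s',d\mid s,a))^N \leq (1-\mu)^N \leq e^{-\mu N}$, using the standard inequality $1-x\leq e^{-x}$. Taking a union bound over the $|\mathcal{S}|\cdot|\mathcal{A}|\cdot|\mathcal{S}|\cdot 2 = 2|\mathcal{S}|^2|\mathcal{A}|$ quadruples, the total failure probability is bounded by $2|\mathcal{S}|^2|\mathcal{A}|\,e^{-\mu N}$. Requiring this to be at most $\delta$ and solving for $N$ yields exactly the bound \eqref{eq:sample-comp-consistent-kernel}.

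The only non-routine step is noticing the asymmetry of the failure event: zero-probability transitions are self-policing and contribute nothing to the union bound, so it is the \emph{minimum positive} mass $\mu$, rather than any other feature of the distribution, that governs the sample complexity. Everything else is textbook Chernoff/union-bound technology; the factor $2$ in the numerator of the log originates from the binary damage indicator and the factor $|\mathcal{S}|^2$ from the (state, next-state) pair.
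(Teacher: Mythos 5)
Your proposal is correct and follows essentially the same route as the paper: a union bound over the at most $2|\mathcal{S}|^2|\mathcal{A}|$ positive-probability transitions, bounding each miss probability by $(1-\mu)^N\leq e^{-\mu N}$ and solving for $N$. Your explicit remark that zero-probability transitions are automatically consistent is a point the paper leaves implicit, but the argument is the same.
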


It turns out building a consistent kernel is sufficient in order to learn $k^*$ under the true MDP $\mathcal{M}$.
\begin{lemma}[Consistent kernels are enough]\label{lem:consistent-kernels}
    Let $p$ be a transition kernel associated with an MDP $\mathcal{M}$ and let $\hat{p}$ be a consistent kernel of $p$. Then Algorithm \ref{alg:in-kernel-out-budget} with input $\hat{p}$ converges to the minimal budget $k^*$ of $\mathcal{M}$.
\begin{proof}
$\mathcal{T}_{\hat{p}}\equiv\mathcal{T}_{p}$ if $\hat{p}$ is consistent with $p$.
\end{proof}
\end{lemma}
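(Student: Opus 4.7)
The plan is to observe that the budget operator $\mathcal{T}_p$ depends on $p$ only through its support pattern---which $(s',d)$ transitions have strictly positive probability---and hence two kernels sharing the same support induce the same operator. Since consistency is precisely equality of support patterns, Algorithm \ref{alg:in-kernel-out-budget} with input $\hat{p}$ performs exactly the same iteration as with input $p$, and Theorem \ref{thm:k-star-convergence} applied to the latter closes the argument.

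First I would unpack \eqref{eq:operator}: the right-hand side
\[
\max_{s' : p(s'\mid s,a) > 0} \bigl[\mathbf{1}_d(s,a,s') + \min_{a'} k(s',a')\bigr]
\]
uses $p$ only in the index set $\{s' : p(s'\mid s,a) > 0\}$ of the outer maximum and in the indicator $\mathbf{1}_d(s,a,s') = \mathbf{1}\{p(d=1\mid s,a,s') > 0\}$. Neither place ever reads a probability value, only whether it is zero or positive.

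Second I would transfer the consistency hypothesis---which is stated at the level of the joint kernel $p(s',d\mid s,a)$---to the two marginal statements above. Writing $p(s'\mid s,a) = p(s',0\mid s,a) + p(s',1\mid s,a)$, the sign condition $\operatorname{sign}(\hat{p}(s',d\mid s,a)) = \operatorname{sign}(p(s',d\mid s,a))$ forces $\{s' : \hat{p}(s'\mid s,a) > 0\} = \{s' : p(s'\mid s,a) > 0\}$, because a sum of two nonnegative numbers is positive iff at least one summand is. On this common support, the condition $p(d=1\mid s,a,s') > 0$ is equivalent to $p(s',1\mid s,a) > 0$, so consistency also gives $\mathbf{1}\{\hat{p}(d=1\mid s,a,s') > 0\} = \mathbf{1}\{p(d=1\mid s,a,s') > 0\}$. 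Plugging both equalities into \eqref{eq:operator} yields $\mathcal{T}_{\hat{p}} k = \mathcal{T}_{p} k$ for every $k\in\bar{\mathbb{N}}^{\mathcal{S}\times\mathcal{A}}$, so the two operators coincide as maps.

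Third, this operator equality makes the conclusion immediate: the iterates produced by Algorithm \ref{alg:in-kernel-out-budget} on input $\hat{p}$ satisfy $k_{n+1} = \mathcal{T}_{\hat{p}} k_n$ starting from $k_0\equiv 0$, and thus coincide termwise with the iteration driven by $\mathcal{T}_{p}$ on the same initial vector. Theorem \ref{thm:k-star-convergence} then gives $k_n(s,a) \to k_*(s,a)$ for every $(s,a)$, completing the argument. I do not anticipate a genuine obstacle here; the only bit of care needed is the marginalization step transferring consistency from the joint kernel to the marginals actually invoked by $\mathcal{T}_p$, and that reduces to the one-line observation that a finite nonnegative sum is positive iff one of its summands is.
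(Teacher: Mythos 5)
Your proposal is correct and follows essentially the same route as the paper: both arguments show that $\mathcal{T}_{\hat{p}}$ and $\mathcal{T}_{p}$ coincide because the operator only reads the support of the kernel, and then invoke Theorem \ref{thm:k-star-convergence}. Your explicit marginalization step (a nonnegative sum is positive iff a summand is) is a small point of care that the paper's proof leaves implicit, but it does not change the argument.
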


We conclude the main body of the paper with a couple remarks: firstly that the samples needed to learn $k_*$ are small; lastly we discuss the utility of using $k_*$ to learn optimal policies.
\begin{remark}[Learning $k_*$ is sample-efficient.]
    The last two lemmas indicate that the minimal budget $k_*$ can be learned with very few samples. Indeed, the number of interactions with the environment is $\tilde{\mathcal{O}}\left(\frac{|\mathcal{S}||\mathcal{A}|}{\mu}\right)$ disregarding logarithmic terms.  Contrast this, for example, with the $\tilde{\mathcal{O}}\left(\frac{|\mathcal{S}||\mathcal{A}|}{(1-\gamma)^3\epsilon^2}\right)$ dependency needed to learn an $\epsilon$-optimal policy with a generative model \citep{agarwal2020modelbased}. There is no accuracy (i.e. $\epsilon$) requirement in our case, nor the sample complexity depends on the effective horizon $(1-\gamma)^{-1}$: the focus is on detecting transitions rather than estimating them, which makes the problem easier, despite requiring a richer set of policies. 
\end{remark}

\begin{remark}[Using $k_*$ to learn optimal policies]
Throughout this paper we have shown that the minimal budget $k_*$ (which is intrinsic to the MDP) can be efficiently learned. The utility of knowing this quantity is that it characterizes the set of feasible, stationary memory-one policies (as was argued in Theorem \ref{thm:feasible-policies}), or, to put it in another way, it specifies the region of the state space that is $\Delta$-unsafe:
    $$
    \mathcal{S}^\Delta_{\texttt{unsafe}} = \left\{s\in\mathcal{S}: k_*(s,a)>\Delta~\forall a\in\mathcal{A}\right\}\,.
    $$
From this point of view knowing $k_*$ effectively ``trims'' the region of interest of the MDP, meaning the search for an optimal policy is now constrained to a smaller state space, and will therefore require less samples and less computations. 
\end{remark} 
\section{Experiments}\label{sec:experiments}
\begin{figure}[t]
    \centering
    \includegraphics[width=\linewidth]{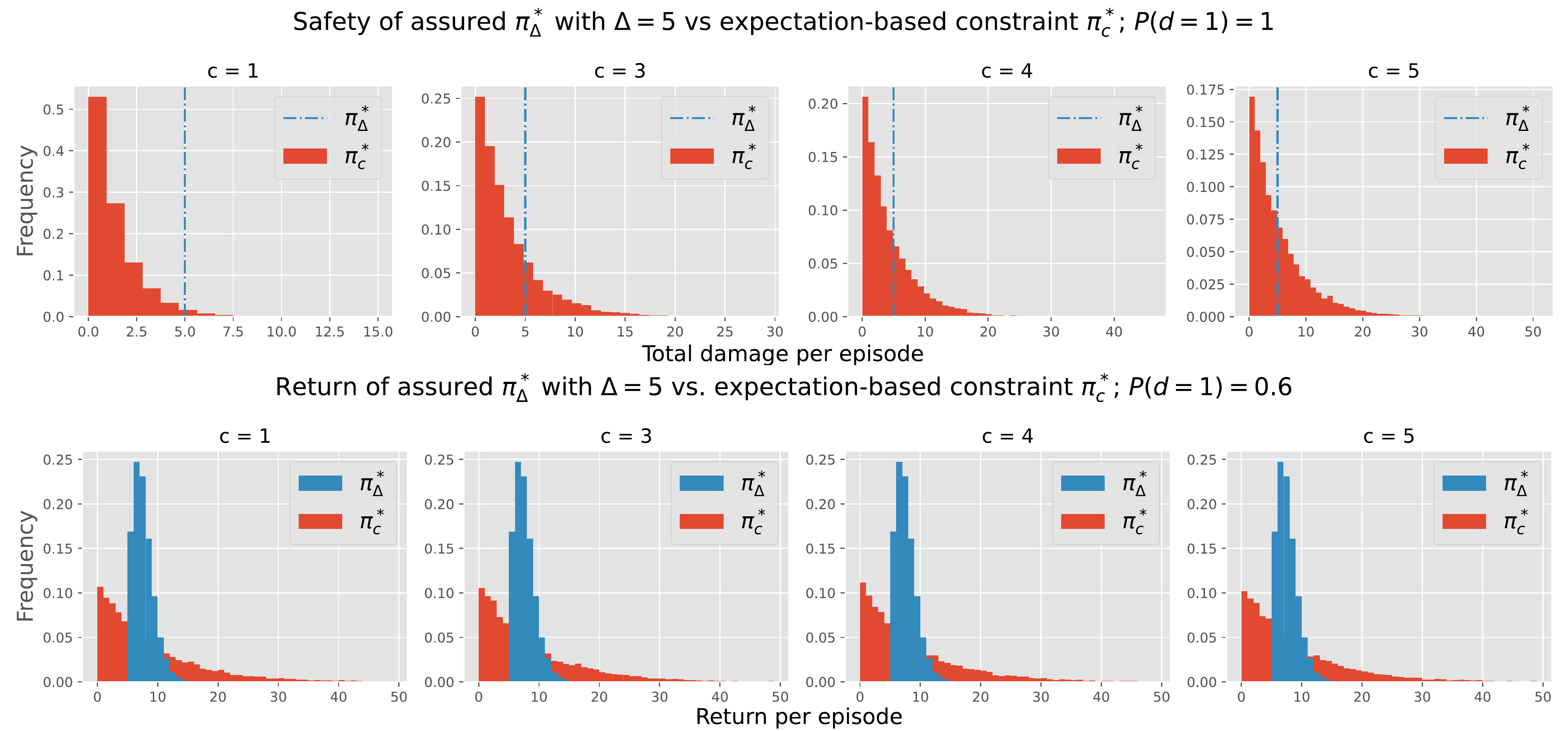}
    \caption{Top: Damage per episode for the optimal policies of the MDP of Figure \ref{fig:counterexample} under different types of constraints. For each panel, the red histogram corresponds to the violations per episode for $\pi_c^*$ under constraint $\mathbb{E}_{\pi_c}\left[\sum_{t=0}^\infty D_{t+1}\right]\leq c$. The assured policy $\pi_\Delta^*$ with \eqref{eq:og-constraint} always attains $\Delta=5$ total damage. Bottom:
    Returns per episode for the optimal policies of the (modified) MDP of Figure \ref{fig:counterexample} where $P(d=1|s,\texttt{left})=0.6$. The policy under our proposed scheme (in blue) always achieves a return of at least $5$, with returns tightly concentrated around $10$.}
    \label{fig:experiment}
\end{figure}
We illustrate the difference between the proposed constraint \eqref{eq:og-constraint} and expectation-like constraints using the  simple MDP of Figure \ref{fig:counterexample}. We recall the optimal policy $\pi_\Delta^*$ under \eqref{eq:og-constraint} chooses action \texttt{left} $\Delta$ times, getting both $\Delta$ damage and reward, and then goes \texttt{right}. The optimal policy $\pi_c^*$ under constraint $\mathbb{E}_{\pi_c}\left[\sum_{t=0}^\infty D_{t+1}\right]\leq c$ chooses action \texttt{left} with probability $\frac{c}{1+c}$ (this is the probability that makes the expected damage constraint hold with equality). It also achieves $c$ expected return.  The top half of Figure \ref{fig:experiment} shows a histogram of the total damage per episode for these optimal policies under different values of $c$, for fixed $\Delta=5$. While the optimal policy for the probability-one constraint $\pi^*_\Delta$ always achieves $\Delta=5$ damage, the damage incurred by the other policy is highly variable. This hints at one of the shortcomings of this type of constraints: even if an optimal policy can be learned, when it is deployed it can have very poor performance in terms of safety.
As a second example, consider a modified version of the same MDP in which the probability of observing damage is $P(d=1|s,\texttt{left})=0.6$. While the optimal policy $\pi_\Delta^*$ remains unchanged, now $\pi_c^*$ takes \texttt{left} more frequently, now with probability $\frac{c}{P(d=1|\texttt{left})+c}$. The bottom half of Figure \ref{fig:experiment} shows the return (sum of rewards in the episode) under both optimal policies. Notice that $\pi_c^*$ is essentially insensitive to $c$, the only difference being that slightly longer tails (not shown on the figure) are observed as $c$ gets larger. The results for the observed damage are similar to those of Experiment I, so we omit them. 
\section{Conclusions}\label{sec:conclusions}
In this work we formulate the problem of Safe Reinforcement Learning under constraints that must be satisfied with probability one. The type of constraint in consideration being that the agent encounters less than $\Delta$ units of damage along a trajectory, where damage is a binary signal. We show that \textit{i)} stationary policies are not adequate for solving this type of problems, \textit{ii)} a sufficiently rich class of policies can be learned if one tracks the damage incurred along the trajectory. The minimal required budget (which is intrinsic to each MDP) can be learned by solving for the fixed point of a newly defined operator, provided one knows a consistent approximation of the transition probabilities. Learning this minimal budget is essentially the same as learning a set of feasible policies. Thus it simplifies the exploration for optimal or near-optimal policies, reducing it to a search within the smaller set of feasible states and actions.
Our experiments illustrate in a simple setup the different nature of probability one constraints as contrasted with expectation-like constraints.

\acks{This work was supported by NSF through grants CAREER 1752362, CPS 2136324, and TRIPODS 1934979, and Johns Hopkins University Catalyst Award, and by ANII-Uruguay through grant FSE\_1\_2019\_1\_159457.}

\clearpage
\bibliography{refs}
\ifthenelse{\boolean{submit-to-conference}}{}{
\clearpage
\section*{Appendix}
\subsection*{Proof of Lemma \ref{lem:mdp_equivalent}}
\begin{proof}
    As we stated in the proof sketch, we show a bijection $f:\Pi_H\rightarrow \tilde{\Pi}_H$ between the set of history-dependent polices for $\mathcal{M}$ and the one for $\tilde{\mathcal{M}}$ such that 
    $\forall\pi\in\Pi_H$, we have $$\mathbb{E}_{\pi,\mathcal{M}}\left[\sum_{t=0}^\infty R_{t+1} ~\bigg|~ S_0=s\right]=\mathbb{E}_{\tilde{\pi},\tilde{\mathcal{M}}}\left[\sum_{t=0}^\infty R_{t+1} ~\bigg|~ (S_0,K_0)=(s,\Delta)\right], \text{where }\tilde{\pi}=f(\pi)\,.$$
    Moreover, $\pi\in\Pi_H$ is feasible for \eqref{eq:og-constraint} if and only if $\tilde{\pi}=f(\pi)$ is feasible for \eqref{eq:augmented-constraint}. 
    
    First, for the MDP $\mathcal{M}$, any $\pi\in\Pi_H$ is determined by set of probability measure on the action space $\{\pi_t( \cdot | h_t): t\geq 0\}$, where we define the history up to time $t$  as $$h_t=(s_0,a_0,r_1,d_1,s_1,\ldots,s_{t-1},a_{t-1},r_{t},d_t,s_t)\in\mathcal{H}_t\,.$$
    Now for the augmented MDP $\tilde{\mathcal{M}}$, the history is defined as
    $$\tilde{h}_t=(s_0,k_0,a_0,r_1,\tilde{d}_1,s_1,\ldots,s_{t-1},k_{t-1},a_{t-1},r_{t},\tilde{d}_t,s_t)\in\tilde{\mathcal{H}}_t\,.$$
    The following mapping $g_t:\mathcal{H}_t\rightarrow \tilde{\mathcal{H}}_t$ is defined from the construction of the augmented MDP
    \begin{align*}
        &\;g(h_t)=\\
        &\;\left(s_0,\overset{k_0}{\overbrace{\Delta-\sum_{l=0}^{-1}d_l}},a_0,r_1,\overset{\tilde{d}_0}{\overbrace{\mathds{1}\{\sum_{l=0}^{-1}d_l> \Delta\}}},s_1,\ldots,s_{t-1},\overset{k_{t-1}}{\overbrace{\Delta-\sum_{l=0}^{t-1}d_l}},a_{t-1},r_{t},\overset{\tilde{d}_{t}}{\overbrace{\mathds{1}\{\sum_{l=0}^{t-1}d_l> \Delta\}}},s_t\right)\,,
    \end{align*}
    This map has well-defined inverse $g^{-1}_t:\tilde{\mathcal{H}}_t\rightarrow \mathcal{H}_t$:
    $$
        g^{-1}(\tilde{h}_t)=\left(s_0,a_0,r_1,\overset{d_1}{\overbrace{k_0-k_1}},s_1,\ldots,s_{t-1},a_{t-1},r_{t},\overset{d_t}{\overbrace{k_{t-1}-k_{t}}},s_t\right)\,.
    $$
    Now for any $\pi=\{\pi_t( \cdot | h_t): t\geq 0\}\in\Pi_H$, let $f(\pi)=\{\pi_t(\cdot| g^{-1}_t(\tilde{h}_t)):t\geq 0\}\in\tilde{\Pi}_H$. $f$ has well-defined inverse $f^{-1}:\tilde{\Pi}_H\rightarrow \Pi_H, f^{-1}(\tilde{\pi})=\{\tilde{\pi}_t(\cdot|g_t(h_t)):t\geq 0\}$, as one can check:
    \begin{align*}
        f\circ f^{-1}(\tilde{\pi})=\{\tilde{\pi}_t(\cdot|g_t^{-1}(g_t(\tilde{h}_t)):t\geq 0\}=\{\tilde{\pi}_t(\cdot|\tilde{h}_t):t\geq 0\}=\pi\,,\\
        f^{-1}\circ f(\pi)=\{\pi_t(\cdot|g_t(g_t^{-1}(\tilde{h}_t)):t\geq 0\}=\{\pi_t(\cdot|h_t):t\geq 0\}=\tilde{\pi}\,.
    \end{align*}
    This shows $f:\Pi_H\rightarrow \tilde{\Pi}_H$ is a bijection. 
    
    Now we left to prove $f$ preserves the expected return and feasibility. It suffices to prove that 
    \begin{equation}\label{eq:hist_dist_equi}
        p_{\pi,\mathcal{M}}(h_t|S_0=s)=p_{f(\pi),\tilde{\mathcal{M}}}(g(h_t)|S_0=s,K_0=\Delta),\ \forall t\geq 0, \forall h_t\in\mathcal{H}_t\,,
    \end{equation}
    that is, the distribution of the history is matched between $\mathcal{M}$ under $\pi$ and $\tilde{M}=f(\pi)$ under $\tilde{\pi}$ through the bijection $g$.
    With \eqref{eq:hist_dist_equi}, we have
    \begin{align*}
        &\;\mathbb{E}_{\pi,\mathcal{M}}\left[\left.\sum_{t=0}^{\infty}R_{t+1} \right\vert S_0=s\right]\\
        &=\;\sum_{t=0}^\infty\mathbb{E}_{\pi_t,\mathcal{M}}\left[\left.R_{t+1} \right\vert S_0=s\right]\\
        &\; =\sum_{t=0}^\infty\int_{\mathcal{H}_{t+1}}r_{t+1}p_{\pi,\mathcal{M}}(h_{t+1}|S_0=s)dh_{t+1}\\
        &\;\qquad \eqref{eq:hist_dist_equi}\\
        &\; =\sum_{t=0}^\infty\int_{\mathcal{H}_{t+1}}r_{t+1}p_{\tilde{\pi},\tilde{\mathcal{M}}}(g(h_{t+1})|S_0=s,K_0=\Delta)dh_{t+1}\\
        &\; =\sum_{t=0}^\infty\int_{\tilde{\mathcal{H}}_{t+1}}r_{t+1}p_{\tilde{\pi},\tilde{\mathcal{M}}}(\tilde{h}_{t+1}|S_0=s,K_0=\Delta)d\tilde{h}_{t+1}\\
        &\;=\mathbb{E}_{\tilde{\pi},\tilde{\mathcal{M}}}\left[\sum_{t=0}^\infty R_{t+1} ~\bigg|~ (S_0,K_0)=(s,\Delta)\right]\,.
    \end{align*}
    Similarly for the constraint, since $\left\{\sum_{t=0}^{M-1} D_{t+1} \leq \Delta\right\}, M\geq 1$ is a decreasing sequence of events, we have, for $\pi\in\Pi_H$ and $\tilde{\pi}=f(\pi)$,
    \begin{align*}
        &\;\mathbb{P}_{\pi,\mathcal{M}}\left(\sum_{t=0}^\infty D_{t+1} \leq \Delta ~\bigg|~ S_0=s\right)\\
        &\;=1-\mathbb{P}_{\pi,\mathcal{M}}\left(\sum_{t=0}^\infty D_{t+1} > \Delta ~\bigg|~ S_0=s\right)\\
        &\;=1-\lim_{M\rightarrow\infty}\mathbb{P}_{\pi,\mathcal{M}}\left(\sum_{t=0}^{M-1} D_{t+1} > \Delta ~\bigg|~ S_0=s\right)\\
        &\;=1-\lim_{M\rightarrow\infty}\int_{\mathcal{H}_{M}}\mathds{1}\left\{\sum_{t=0}^{M-1} d_{t+1} > \Delta \right\}p_{\pi,\mathcal{M}}(h_M|S_0=s)dh_M\\
        &\; (\eqref{eq:hist_dist_equi} \text{, and the definition of } \tilde{d}_t )\\
        &\;=1-\lim_{M\rightarrow\infty}\int_{\mathcal{H}_{M}}\tilde{d}_M p_{\tilde{\pi},\tilde{\mathcal{M}}}(g(h_M)|S_0=s,K_0=\Delta)dh_M\\
        &\;=1-\lim_{M\rightarrow\infty}\int_{\tilde{\mathcal{H}}_{M}}\tilde{d}_M p_{\tilde{\pi},\tilde{\mathcal{M}}}(\tilde{h}_M)|S_0=s,K_0=\Delta)d\tilde{h}_M\\
        &\;=1-\lim_{M\rightarrow\infty}\mathbb{P}_{\tilde{\pi},\tilde{\mathcal{M}}}\left(\tilde{D}_M=1|S_0=s,K_0=\Delta\right)\\
        &\;=1-\sup_{M}\mathbb{P}_{\tilde{\pi},\tilde{\mathcal{M}}}\left(\tilde{D}_M=1|S_0=s,K_0=\Delta\right)\,,
    \end{align*}
    where the last equality is due to the fact that $\tilde{D}_t\leq \tilde{D}_{t+1},\forall t\geq 1$ almost surely, which means $\mathbb{P}_{\tilde{\pi},\tilde{\mathcal{M}}}\left(\tilde{D}_M=1|S_0=s,K_0=\Delta\right)$ is increasing with respect to $M$. Therefore, $\pi$ is feasible, $$\mathbb{P}_{\pi,\mathcal{M}}\left(\sum_{t=0}^\infty D_{t+1} \leq \Delta ~\bigg|~ S_0=s\right)\,,$$
    if and only if
    $$
        \sup_{M}\mathbb{P}_{\tilde{\pi},\tilde{\mathcal{M}}}\left(\tilde{D}_M=1|S_0=s,K_0=\Delta\right)=0\,,
    $$
    or equivalently,
    $$
        \mathbb{P}_{\tilde{\pi},\tilde{\mathcal{M}}}\left(\tilde{D}_t=0|S_0=s,K_0=\Delta\right)=1,\forall t\,,
    $$
    and this exactly means $\tilde{\pi}=f(\pi)$ is feasible. 
    
    With that, we conclude that optimization problem \eqref{eq:maximize-return}--\eqref{eq:og-constraint} is equivalent to
    \begin{align*}
        \max_{f(\pi)\in\tilde{\Pi}_H} &~\mathbb{E}_{f(\pi),\tilde{\mathcal{M}}}\left[\sum_{t=0}^\infty R_{t+1} ~\bigg|~ S_0=s,K_0=\Delta\right] \\
        \text{s.t:}&~ \mathbb{P}_{f(\pi),\tilde{\mathcal{M}}}\left(\tilde{D}_t=0|S_0=s,K_0=\Delta\right)=1,\forall t\,.
    \end{align*}
    This is exactly \eqref{eq:maximize-return-augmented}--\eqref{eq:augmented-constraint}.
    
    What is left to show is equation \eqref{eq:hist_dist_equi}. Suppose $\tilde{\pi}=f(\pi)$. By induction, for $h_0=(s,a_0)$, $g(h_0)=(s,\Delta,a_0)$, we have 
    $$
        p_{\pi,\mathcal{M}}(h_0|S_0=s)=\pi_0(a_0|S_0=s)=\tilde{\pi}_0(a_0|S_0=s,K_0=\Delta)=p_{\tilde{\pi},\tilde{\mathcal{M}}}(g(h_0)|S_0=s,K_0=\Delta)\,,
    $$
    Now assume for some $t\geq 1$, we have
    $$
        p_{\pi,\mathcal{M}}(h_{t-1}|S_0=s)=p_{\tilde{\pi},\tilde{\mathcal{M}}}(g(h_{t-1})|S_0=s,K_0=\Delta)\,,
    $$
    Then $\forall h_t\in\mathcal{H}_t$
    \begin{align*}
        p_{\pi,\mathcal{M}}(h_t|S_0=s)&=\; p_{\pi,\mathcal{M}}(h_t|h_{t-1})p_{\pi,\mathcal{M}}(h_{t-1}|S_0=s)\\
        &=\; p_{\pi,\mathcal{M}}(s_t,a_{t-1},r_t,d_t|h_{t-1})p_{\pi,\mathcal{M}}(h_{t-1}|S_0=s)\\
        &=\; \pi_t(a_{t-1}|h_{t-1})p_{\mathcal{M}}(s_t,r_t,d_t|h_{t-1},a_{t-1})p_{\pi,\mathcal{M}}(h_{t-1}|S_0=s)\,,
    \end{align*}
    For the first term, we have
    \begin{equation}
        \pi_t(a_{t-1}|h_{t-1})=\tilde{\pi}_t(a_{t-1}|g(h_{t-1}))\,.\label{eq:lem_equiv_1}
    \end{equation}
    For the second term, we have
    \begin{align}
        p_{\pi,\mathcal{M}}(s_t,r_t,d_t|h_{t-1})&=\; p_{\mathcal{M}}(s_t,r_t,d_t|s_{t-1},a_{t-1},k_{t-1})\nonumber\\
        &\; (\text{From the construction of augmented MDP})\nonumber\\
        &=\;p_{\tilde{\mathcal{M}}}(s_t,k_{t-1}-d_t,r_t,,\mathds{1}\left\{k_{t-1}-d_t<0\right\}|s_{t-1},a_{t-1},k_{t-1})\label{eq:lem_equiv_2}
    \end{align}
    For the last term, we have, from the induction assumption,
    \begin{equation}
        p_{\pi,\mathcal{M}}(h_{t-1}|S_0=s)=p_{\tilde{\pi},\tilde{\mathcal{M}}}(g(h_{t-1})|S_0=s,K_0=\Delta)\,.\label{eq:lem_equiv_3}
    \end{equation}
    Using \eqref{eq:lem_equiv_1}\eqref{eq:lem_equiv_2}\eqref{eq:lem_equiv_3}, we have 
    \begin{align*}
        &\;p_{\pi,\mathcal{M}}(h_t|S_0=s)\\ 
        &=\; \tilde{\pi}_t(a_{t-1}|g(h_{t-1}))p_{\tilde{\mathcal{M}}}(s_t,r_t,k_{t-1}-d_t,\mathds{1}\left\{k_{t-1}-d_t<0\right\}|s_{t-1},a_{t-1},k_{t-1})\\
        &\;\qquad\qquad\qquad p_{\tilde{\pi},\tilde{\mathcal{M}}}(g(h_{t-1})|S_0=s,K_0=\Delta)\\
        &=\; p_{\tilde{\pi},\tilde{\mathcal{M}}}(s_t,k_{t-1}-d_t,a_{t-1},r_t,\mathds{1}\left\{k_{t-1}-d_t<0\right\}|g(h_{t-1}))p_{\tilde{\pi},\tilde{\mathcal{M}}}(g(h_{t-1})|S_0=s,K_0=\Delta)\\
        &\;=p_{\tilde{\pi},\tilde{\mathcal{M}}}(g(h_t)|g(h_{t-1}))p_{\tilde{\pi},\tilde{\mathcal{M}}}(g(h_{t-1})|S_0=s,K_0=\Delta)\\
        &=\;p_{\tilde{\pi},\tilde{\mathcal{M}}}(g(h_{t})|S_0=s,K_0=\Delta)\,.
    \end{align*}
    Induction on $t$ gives us \eqref{eq:hist_dist_equi}, which completes the proof.
\end{proof}

\subsection*{Proof of \eqref{eq:B-monot-1}--\eqref{eq:B-monot-2} (properties of $B_\pi$)}
\begin{itemize}
\item Safe and more budget $\rightarrow$ safe
$$
    B_\pi(s,k,a)=0 \Longrightarrow B_\pi(s,k+i,a)=0
$$
\begin{proof}
\begin{align*}
    B_\pi(s,k,a)=&\mathbb{E}_\pi\left[-\mathbb{I}\left\{\sum_{l=t}^\infty D_{l+1}\leq k~\bigg|~S_t=s, A_t=a\right\}\right]=0\iff\\
    \iff&P_\pi\left\{\sum_{l=t}^\infty D_{l+1}\leq k~\bigg|~S_t=s, A_t=a\right\}=1
\end{align*}
Event inclusion:
$$
\left\{\sum_{l=t}^\infty D_{l+1}\leq k~\bigg|~S_t=s, A_t=a\right\}\subset\left\{\sum_{l=t}^\infty D_{l+1}\leq k+i~\bigg|~S_t=s, A_t=a\right\}\quad i\geq 0
$$
Monotonicity of P:
$$
1=P_\pi\left\{\sum_{l=t}^\infty D_{l+1}\leq k~\bigg|~S_t=s, A_t=a\right\}\leq P_\pi\left\{\sum_{l=t}^\infty D_{l+1}\leq k+i~\bigg|~S_t=s, A_t=a\right\}\Longrightarrow
$$
$$
B_\pi(s,k+i,a)=0
$$
\end{proof}

\item unsafe and less budget $\rightarrow$ unsafe
$$
B_\pi(s,k,a)=-\infty\Longrightarrow B_\pi(s,k-i,a)=-\infty\quad\quad\text{(unsafe and less slack $\rightarrow$ unsafe.)}
$$
\begin{proof}
$$
B_\pi(s,k,a)=-\infty\iff P_\pi\left\{\sum_{l=t}^\infty D_{l+1}\leq k~\bigg|~S_t=s, A_t=a\right\}<1
$$
Monotonicity:
$$
1>P_\pi\left\{\sum_{l=t}^\infty D_{l+1}\leq k~\bigg|~S_t=s, A_t=a\right\}>P_\pi\left\{\sum_{l=t}^\infty D_{l+1}\leq k-i~\bigg|~S_t=s, A_t=a\right\}\Longrightarrow
$$
$$
B_\pi(s,k-i,a)=-\infty
$$
\end{proof}
\end{itemize}

\subsection*{Proof of Theorem \ref{thm:k-star}}

\begin{proof}
    Consider an $(s,a)$-pair with $k_*(s,a)=K^*\geq 1$. We prove \eqref{eq:k_star_recursion} by contradiction.
    \begin{itemize}
        \item Assume $$K^*=k_*(s,a)>\max_{s':p(s'\mid s,a)>0}\left[\mathds{1}_d(s,a,s')+\min_{a'}k_*(s',a')\right]\,.$$
        This suggests that $\forall s':p(s'\mid s,a)>0$, we have
        $$
            K^*-\mathds{1}_d(s,a,s')>\min_{a'}k_*(s',a')\implies K^*-\mathds{1}_d(s,a,s')-1\geq \min_{a'}k_*(s',a')\,,
        $$
        and which, by definition of $k^*$, it is equivalent to,
        $$
            \mathbb{P}_{\pi^*}\left(\sum_{t=0}^{\infty}D_{t+1}\leq K^*-1-\mathds{1}_d(s,a,s')\mid S_0=s'\right)=1\,.
        $$
        Then we have
        \begin{align*}
            &\;\mathbb{P}_{\pi^*}\left(\sum_{t=0}^{\infty}D_{t+1}\leq K^*-1\mid S_0=s,A_0=a\right)\\
            =&\;\sum_{s'}\mathbb{P}_{\pi^*}\left(\sum_{t=1}^{\infty}D_{t+1}\leq K^*-1-D_1\mid S_0=s,A_0=a,S_1=s'\right)p(s'\mid s,a)\\
            \geq &\; \sum_{s'}\mathbb{P}_{\pi^*}\left(\sum_{t=1}^{\infty}D_{t+1}\leq K^*-1-\mathds{1}_d(s,a,s')\mid S_0=s,A_0=a,S_1=s'\right)p(s'\mid s,a)\\
            =&\;\sum_{s'}\mathbb{P}_{\pi^*}\left(\sum_{t=1}^{\infty}D_{t+1}\leq K^*-1-\mathds{1}_d(s,a,s')\mid S_1=s'\right)p(s'\mid s,a)=1\,.
        \end{align*}
        This is equivalent to $k_*(s,a)\leq K^*-1$, a contradiction. \begin{equation}\label{eq:k_star_leq}
            k_*(s,a)\leq \max_{s':p(s'\mid s,a)>0}\left[\mathds{1}_d(s,a,s')+\min_{a'}k_*(s',a')\right]\,.
        \end{equation}
        \item Assume $$K^*=k_*(s,a)<\max_{s':p(s'\mid s,a)>0}\left[\mathds{1}_d(s,a,s')+\min_{a'}k_*(s',a')\right]\,.$$
        This suggests that there exists an $s'$ with $p(s'\mid s,a)>0$, for which we have
        $$
            K^*-\mathds{1}_d(s,a,s')<\min_{a'}k_*(s',a')\,,
        $$
        and which, by definition of $k^*$, it is equivalent to,
        $$
            \mathbb{P}_{\pi}\left(\sum_{t=0}^{\infty}D_{t+1}> K^*-\mathds{1}_d(s,a,s')\mid S_0=s'\right)>0,\quad \forall\pi\,,
        $$
        and it leads to 
        $$
            \mathbb{P}_{\pi}\left(\sum_{t=0}^{\infty}D_{t+1}> K^*-\mathds{1}_d(s,a,s')\mid S_0=s'\right)p(d=\mathds{1}_d(s,a,s')|s,a,s')>0\,.
        $$
        Then we have for any policy $\pi$
        \begin{align*}
            &\;\mathbb{P}_{\pi}\left(\sum_{t=0}^{\infty}D_{t+1}> K^*\mid S_0=s,A_0=a\right)\\
            =&\;\sum_{s'}\left(\mathbb{P}_{\pi}\left(\sum_{t=1}^{\infty}D_{t+1}> K^*-1\mid S_0=s,A_0=a,S_1=s'\right)p(d=1|s,a,s')p(s'\mid s,a)\right.\\
            &\;\qquad\left.+\mathbb{P}_{\pi}\left(\sum_{t=1}^{\infty}D_{t+1}> K^*\mid S_0=s,A_0=a,S_1=s'\right)p(d=0|s,a,s')p(s'\mid s,a)\right)\\
            =&\;\sum_{s'}\left(\mathbb{P}_{\pi}\left(\sum_{t=1}^{\infty}D_{t+1}> K^*-1\mid S_1=s'\right)p(d=1|s,a,s')p(s'\mid s,a)\right.\\
            &\;\qquad\left.+\mathbb{P}_{\pi}\left(\sum_{t=1}^{\infty}D_{t+1}> K^*\mid S_1=s'\right)p(d=0|s,a,s')p(s'\mid s,a)\right)>0\,.\\
        \end{align*}
        This is equivalent to $k_*(s,a)>K^*$, a contradiction. Therefore one must have
        \begin{equation}\label{eq:k_star_geq}
            k_*(s,a)\geq \max_{s':p(s'\mid s,a)>0}\left[\mathds{1}_d(s,a,s')+\min_{a'}k_*(s',a')\right]\,.
        \end{equation}
    \end{itemize}
    Combining the two inequalities \eqref{eq:k_star_leq}\eqref{eq:k_star_geq}, we have
    $$
        k_*(s,a)= \max_{s':p(s'\mid s,a)>0}\left[\mathds{1}_d(s,a,s')+\min_{a'}k_*(s',a')\right]\,,
    $$
    for $k_*(s,a)=K^*\geq 1$. 
    
    Lastly, for the case $k_*(s,a)=0$, repeat the proof for \eqref{eq:k_star_geq}, we have
    $$0=k_*(s,a)\geq \max_{s':p(s'\mid s,a)>0}\left[\mathds{1}_d(s,a,s')+\min_{a'}k_*(s',a')\right]\,,$$
    then the right hand side must be $0$, which equals $k_*(s,a)$.
\end{proof}

\subsection*{Properties of $\mathcal{T}$}
\begin{definition}
    For two vectors $k_1,k_2 \in\bar{\mathbb{N}}^{\mathcal{S}\times\mathcal{A}}$, we write $k_1\leq k_2$ if $k_1(s,a)\leq k_2(s,a),\forall (s,a)\in\mathcal{S}\times\mathcal{A}$. Then $(\leq)$ defines a partial order relation on $\bar{\mathbb{N}}^{\mathcal{S}\times\mathcal{A}}$.
\end{definition}
\begin{proposition}[Properties of $\mathcal{T}$]
    The operator $\mathcal{T}$ in \eqref{eq:operator} satisfies the following properties:
    
    \begin{enumerate}
        \item \label{lem:operator-monotonicity} $\mathcal{T}$ preserves the partial ordering $(\leq)$ on $\bar{\mathbb{N}}^{\mathcal{S}\times\mathcal{A}}$:
        $\quad k_1\leq k_2 \implies \mathcal{T}k_1\leq\mathcal{T} k_2\,.$
    \item $k_*$ is a fixed point of $\mathcal{T}$: 
    $\quad \mathcal{T}k_*=k_*\,.$
    
    \end{enumerate}
    \begin{proof}
    \begin{enumerate}
        \item Using that $k_1(s',a')\leq k_2(s',a')\quad\forall(s,a)\in\mathcal{S}\times\mathcal{A}$ the following inequality holds:
        \begin{align*}
        (\mathcal{T}~k_1)(s,a) &= \max_{s':p(s'\mid s,a)>0}\left[\mathds{1}_d(s,a,s')+\min_{a'}k_1(s',a')\right]\\
        &\leq \max_{s':p(s'\mid s,a)>0}\left[\mathds{1}_d(s,a,s')+\min_{a'}k_2(s',a')\right]\\
        &= (\mathcal{T}~k_2)(s,a)
        \end{align*}
        \item This is the result of Theorem \ref{thm:k-star}.
    \end{enumerate}
    \end{proof}    
\end{proposition}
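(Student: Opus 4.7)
The plan is to handle the two properties separately, with both following rather directly from objects already established in the paper. For the monotonicity claim, I would start from the pointwise inequality $k_1(s',a') \leq k_2(s',a')$ that holds for every $(s',a') \in \mathcal{S} \times \mathcal{A}$, and then propagate it upward through the two layers of the definition of $\mathcal{T}$. First I would observe that taking $\min$ over $a'$ preserves pointwise inequalities, so $\min_{a'} k_1(s',a') \leq \min_{a'} k_2(s',a')$ for every $s'$. Adding the fixed quantity $\mathbf{1}_d(s,a,s')$ to both sides preserves the inequality, and then taking $\max$ over $s'$ in the support of $p(\cdot \mid s,a)$ preserves it once more. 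Chaining these three steps gives $(\mathcal{T} k_1)(s,a) \leq (\mathcal{T} k_2)(s,a)$ for every $(s,a)$, which is the desired conclusion $\mathcal{T} k_1 \leq \mathcal{T} k_2$.

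For the fixed-point claim, the plan is to reduce it directly to Theorem~\ref{thm:k-star}. That theorem establishes the identity
\begin{equation*}
k_*(s,a) = \max_{s' : p(s' \mid s,a) > 0} \left[ \mathbf{1}_d(s,a,s') + \min_{a'} k_*(s',a') \right]
\end{equation*}
for every $(s,a) \in \mathcal{S} \times \mathcal{A}$. Comparing with the definition of $\mathcal{T}$, the right-hand side is precisely $(\mathcal{T} k_*)(s,a)$. Hence $\mathcal{T} k_* = k_*$, i.e., $k_*$ is a fixed point of $\mathcal{T}$ in $\bar{\mathbb{N}}^{\mathcal{S} \times \mathcal{A}}$.

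Honestly, there is not much of an obstacle in either part once Theorem~\ref{thm:k-star} is in hand; both statements are essentially unpackings of definitions. The mildest subtlety worth flagging is that $\mathcal{T}$ is defined on the extended nonnegative integers $\bar{\mathbb{N}} = \mathbb{N} \cup \{\infty\}$, so I would briefly note that the arithmetic and order operations $\min, \max, +, \leq$ all extend to $\bar{\mathbb{N}}$ in the standard way (with $\infty + n = \infty$ and $n \leq \infty$), which is what allows both the monotonicity argument and the identification with the recursion of Theorem~\ref{thm:k-star} to go through unchanged when some entries of $k_1, k_2, k_*$ are infinite. No other step requires care.
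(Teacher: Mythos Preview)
Your proposal is correct and follows essentially the same approach as the paper: the monotonicity is obtained by propagating the pointwise inequality through $\min$, addition of $\mathbf{1}_d$, and $\max$, and the fixed-point property is deferred directly to Theorem~\ref{thm:k-star}. Your version is slightly more explicit (spelling out why each operation preserves order, and noting the $\bar{\mathbb{N}}$ conventions), but the argument is identical in substance.
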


\subsection*{Proof of Theorem \ref{thm:k-star-convergence}}
\begin{proof}
Firstly notice Algorithm \ref{alg:in-kernel-out-budget} starts iterating at $k_0=0$. Since (by definition) $0\leq k_*$, then $k_0\leq k_*$. Since $\mathcal{T}$ preserves partial ordering we have $k_n=\mathcal{T}^{(n)} k_0\leq \mathcal{T}^{(n)}k_*=k_*~\forall n$. This means
\begin{equation}
k_\infty (s,a)\leq k_*(s,a)\quad\forall (s,a)\in\mathcal{S}\times\mathcal{A} \label{eq:kinfty-kstar}
\end{equation}
Through the rest of this proof we show \eqref{eq:kinfty-kstar} holds with equality.\\
We proceed by induction, first by showing 
\vspace{0.2cm}

\noindent
\emph{(Induction base)}: $k_*(s,a)=0\iff k_\infty(s,a)=0$.\\
$(\implies)$ We invoke the monotonicity of $\mathcal{T}$ to show
\begin{align*}
    0\leq k_0 &\implies\; \mathcal{T}^{(n)}0\leq \mathcal{T}^{(n)} k_0 = k_n\,, \forall n\geq 0 &\implies\; 0\leq k_\infty\\
    0=k_0\leq k_* &\implies\; k_n=\mathcal{T}^{(n)}k_0\leq \mathcal{T}^{(n)}k_*=k_*\,, \forall n\geq 0 &\implies\; k_\infty\leq k_*
\end{align*}
Then $0\leq k_\infty\leq k_*$, so for all $(s,a) : k_*(s,a)=0\implies k_\infty(s,a)=0$.\\
$(\impliedby)$ Suppose for some $(s,a)\in\mathcal{S}\times\mathcal{A}$, we have
$$k_\infty(s,a)=\max_{s':p(s'\mid s,a)>0}\left[\mathds{1}\{p(d=1\mid s,a,s')\}+\min_{a'}k_\infty(s',a')\right]=0$$
Hence for all $s'$ with $p(s'|s,a)>0$, we must have $p(d=1|s,a,s')=0\text{~and~}\min_{a'}k_\infty(s',a')=0$.

Now consider the policy $\pi(\cdot)$ that, at time $t$, takes actions $A_t=\pi(S_t) = \operatorname{argmin}_a k_\infty(S_t,a)$. If $k_\infty(S_t,A_t)=0$, then by our preceding argument, we have
$$
    \mathbb{P}_\pi(D_{t+1}=1|S_t,A_t,S_{t+1})=0,\ \mathbb{P}_\pi(\min_{a}k_\infty(S_{t+1},a)=0|S_t,A_t)=1\,.
$$
That is, given $k_\infty(S_t,A_t)=0$, taking action $A_t$ at $S_t$ guarantees that, with probability $1$, the MDP transitions to a state $S_{t+1}$ with $\min_{a}k_\infty(S_{t+1},a)=0$ and does not incur damage. Furthermore, $\min_{a}k_\infty(S_{t+1},a)=0$ restricts us to take action $A_{t+1}=\pi(S_{t+1})$ with $k_\infty(S_{t+1},A_{t+1})=0$.

Repeating this argument for the entire trajectory $\tau=\{S_t,A_t\}_{t=0}^\infty$ induced by $\pi$ starting from $S_0=s,A_0=a$. Then with probability $1$, we have
$$
k_\infty(S_t,A_t)=0,\ D_{t+1}=0\,, \forall t\geq 0\,.
$$
Then we find a policy $\pi$ such that $\mathbb{P}_\pi\left(\sum_{t=0}^\infty D_{t+1}\leq 0|S_0=s,A_0=a\right)=0$, then $B_*(s,0,a)=0$, which is equivalent to $k_*(s,a)=0$. This completes the induction base. 
\vspace{0.2cm}

\noindent
\emph{(Induction step)}: Now given some $L: 0\leq L < \infty$, we assume $$k_*(s,a)=\ell\iff k_\infty(s,a)=\ell\,,\ \forall\ell\leq L\,,$$ and show $k_*(s,a)=L+1\iff k_\infty(s,a)=L+1$.\\
$(\implies)$ Given some $(s,a)$ with $k_*(s,a)=L+1$, by monotonicity of $\mathcal{T}$, we have $k_\infty(s,a)\leq k_*(s,a)=L+1$. Now suppose $k_\infty(s,a)=\ell$ for some $\ell<L+1$, then by our inductive assumption, one would have $k_*(s,a)=k_\infty(s,a)=l$, contradicting the fact that $k_*(s,a)=L+1$.
\noindent
$(\impliedby)$ We show the final step, namely $k_\infty(s,a)=L+1\implies k_*(s,a)= L+1$. First, we must have $k_*(s,a)\geq L+1$, otherwise by our inductive assumption $k_\infty(s,a)=l$ for some $l\leq L$, which contradicts that $k_\infty(s,a)=L+1$.

Now we show $k_*(s,a)\leq L+1$. Notice that
$$k_\infty(s,a)=\max_{s':p(s'\mid s,a)>0}\left[\mathds{1}\{p(d=1\mid s,a,s')\}+\min_{a'}k_\infty(s',a')\right]=L+1\,.$$

Consider policy $\pi(\cdot)$ that takes actions $A_t=\pi(S_t)=\argmin_a k_\infty(S_t,a)$ and trajectories $\tau=\{S_t,A_t\}_{t=0}^\infty$ such that $S_0=s, A_0=a$. We carefully examine the $k_\infty$ value along the trajectory $\tau$ under $\pi$.  

First of all, with probability one, $k_\infty(S_t,A_t)\geq k_\infty(S_{t+1},A_{t+1}),\forall t\geq 0$. Because
\begin{align*}
    &\;\mathds{1}\{p(d=1\mid S_t,A_t,S_{t+1})\}+k_\infty(S_{t+1},A_{t+1})\\
    =&\;\mathds{1}\{p(d=1\mid S_t,A_t,S_{t+1})\}+\min_{a'}k_\infty(S_{t+1},a')\\
    \leq&\; \max_{s':p(s'\mid S_t,A_t)>0}\left[\mathds{1}\{p(d=1\mid S_t,A_t,s')\}+\min_{a'}k_\infty(s',a')\right]\leq k_\infty(S_t,A_t)\,.
\end{align*}

Given a trajectory $\tau=\{S_t,A_t\}_{t=0}^\infty$ realized by $\pi$, let $t_0$ be the first $t\geq 0$ such that $L+1=k_\infty(S_t,A_t)> k_\infty(S_{t+1},A_{t+1})$, then $k_\infty(S_{t_0+1},A_{t_0+1})=l$ for some $l\leq L$. For trajectories such that $t_0$ does not exist, we must have that for $t\geq 0$, $L+1=k_\infty(S_t,A_t)=k_\infty(S_{t+1},A_{t+1})$, and $\mathds{1}\{p(d=1\mid S_t,A_t,S_{t+1})\}=0$, i.e. this trajectory incurs no damage at all. Now we consider trajectories such that $t_0$ exists.

Prior to $t_0$, for $t<t_0$, we must also have $\mathds{1}\{p(d=1\mid S_t,A_t,S_{t+1})\}=0$, since $L+1=k_\infty(S_t,A_t)=k_\infty(S_{t+1},A_{t+1})$. At $t_0$, the MDP could incur damage depending on the value of $\mathds{1}\{p(d=1\mid S_{t_0},A_{t_0},S_{t_0+1})\}$.

After $t_0$, by our inductive assumption, we have $k_*(S_{t_0+1},A_{t_0+1})=k_\infty(S_{t_0+1},A_{t_0+1})=l$, and $k_*(S_{t_0+t},A_{t_0+t})=k_\infty(S_{t_0+t},A_{t_0+t}), t\geq 1$ for the remaining trajectory after $t_0$. That is, upon reaching $(S_{t_0+1},A_{t_0+1})$, the remaining trajectory can be also viewed as a trajectory generated by $\pi^*(s)=\argmin_ak_*(s,a)$, and $\pi^*$ will incur at most $l$ damage, starting from $(S_{t_0+1},A_{t_0+1})$, since $k_*(S_{t_0+1},A_{t_0+1})=l$. 

Overall, this trajectory incurs no more than $l+1\leq L+1$ damage. This holds true for any trajectory generated by $\pi$, i.e.
$$
    \mathbb{P}_\pi\left(\sum_{t=0}^\infty D_{t+1}\leq L+1\mid S_0=s,A_0=a\right)=1\,.
$$
This implies $B^*(s,L+1,a)=0$, which is equivalent to $k_*(s,a)\leq L+1$. Since we have shown $k_*(s,a)\geq L+1$, one finally conclude $k_*(s,a)=L+1$.


By induction we shown that $k_*(s,a)=l\iff k_\infty(s,a)=l,\forall 0\leq l<\infty$. The final step is showing $k_*(s,a)=\infty\iff k_\infty(s,a)=\infty$.\\
$(\impliedby)$ For every $(s,a)$ such that $k_\infty(s,a)=\infty$ apply the monotonicity of $\mathcal{T}$ to get $k_*(s,a)\geq k_\infty(s,a)=\infty$, so it must be $k_*(s,a)=\infty$.\\
$(\implies)$ Suppose $k_*(s,a)=\infty$, then $k_\infty(s,a)$ must be $\infty$. Because $k_\infty(s,a)=l$ for any value other than $\infty$ leads to $k_*(s,a)=l$, a contradiction.
\end{proof}

\subsection*{Proof of Lemma \ref{lem:sample-complexity}}
\begin{proof}
For every $(s,a,s',d)$ with $p(s',d|s,a)\geq\mu>0$, the estimated kernel has $\hat{p}(s',d|s,a)>0$ when transition $(s,a)\rightarrow (s',d)$ is observed at least once in our samples. Let $F_{(s,a)\rightarrow(s',d)}$ be the event of sampling the transition $(s',d)$ from $(s,a)$ (where there are at most $2\mathcal{S}$ possible transitions for fixed $(s,a)$). Let $X_{(s,a)\rightarrow(s',d)} = \mathds{1}\left(F_{(s,a)\rightarrow(s',d)}\right)$. Then the probability that Algorithm \ref{alg:kernel-builder} fails to produce a consistent kernel satisfies:
\begin{align*}
    \mathbb{P}\left(\bigcup_{(s,a)\in\mathcal{S}\times\mathcal{A}}\bigcup_{\substack{(s',d):\\\mathbb{P}(s',d|s,a)>0}}\{F_{(s,a)\rightarrow(s',d)}\}^C\right)&\leq\sum_{(s,a)\in\mathcal{S}\times\mathcal{A}}\sum_{\substack{(s',d):\\ \mathbb{P}(s',d|s,a)>0}}\mathbb{P}\left(X_{(s,a)\rightarrow(s',d)}=0\right)\\
    &=\sum_{(s,a)\in\mathcal{S}\times\mathcal{A}}\sum_{\substack{(s',d):\\ \mathbb{P}(s',d|s,a)>0}}(1-\mu)^N\\
    &\leq 2|\mathcal{S}|^2|\mathcal{A}|(1-\mu)^N\,.\\
    &\leq 2|\mathcal{S}|^2|\mathcal{A}|\exp(N\log(1-\mu))\\
    &\leq 2|\mathcal{S}|^2|\mathcal{A}|\exp(-\mu N)\\
    &= 2|\mathcal{S}|^2|\mathcal{A}|\exp\left(-\log\frac{2|\mathcal{S}|^2|\mathcal{A}|}{\delta}\right)=\delta\,,
\end{align*}
where the last inequality uses the fact that $\log(1-\mu)\leq -\mu$.
\end{proof}

\subsection*{Proof of Lemma \ref{lem:consistent-kernels}}
\begin{proof}
        Notice that Algorithm \ref{alg:in-kernel-out-budget} under input $\hat{p}$ performs updates as 
        $$k_{n+1}(s,a) \leftarrow \max_{s':\hat{p}(s'\mid s,a)>0}\left[\mathds{1}\{\hat{p}(d=1\mid s,a,s')\}+\min_{a'}k_n(s',a')\right]. $$
    If $\hat{p}$ is consistent with $p$ then for all $(s,a)\in\mathcal{S}\times\mathcal{A}$ the outer maximization is carried over the same set of $s'$ and $\mathds{1}\{\hat{p}(d=1\mid s,a,s')=1\iff\mathds{1}\{{p}(d=1\mid s,a,s')$. Then the updates of Algorithm \ref{alg:in-kernel-out-budget} under both inputs are identical. Since this algorithm converges under $p$ to $k^*$ (Theorem \ref{thm:k-star-convergence}), the same is true under $\hat{p}$.
    \end{proof}}



\end{document}